\newcommand{\ignore}[1]{}
\newcommand{\todo}[1]{%
  \ifthenelse{\boolean{ShowTODO}}%
  {{\color{cyan} #1}}%
  {}%
}
\newcommand{\Cpp}{C\raise.08ex\hbox{\tt ++}\xspace}
\newcommand{\Cfree}{\ensuremath{\calX_{\rm free}}\xspace}
\newcommand{\Cs}{C-space\xspace}
\newcommand{\Css}{C-spaces\xspace}
\newcommand{\LB}{MPLB\xspace}
\newcommand{\g}{cost-to-come}
\newcommand{\h}{cost-to-go}
\newcommand{\clb}{\ensuremath{c_{i-1}(\text{\LB})}\xspace}
\newcommand{\clbi}{\ensuremath{c_{i}(\text{\LB})}\xspace}
\newcommand{\cfmt}{\ensuremath{c_i(\text{aFMT}^*)}\xspace}
\newcommand{\set}[1]{\ensuremath{\{ #1\}}}
\newcommand{\calG}{\ensuremath{\mathcal{G}}\xspace}
\newcommand{\calX}{\ensuremath{\mathcal{X}}\xspace}
\newcommand{\calT}{\ensuremath{\mathcal{T}}\xspace}
\newcommand{\calH}{\ensuremath{\mathcal{H}}\xspace}
\newtheorem{thm}{Theorem}
\newtheorem{cor}{Corollary}
\newtheorem{lem}{Lemma}
\newtheorem{obs}{Observation}
\newtheorem{definition}[thm]{Definition}
\newcommand{\textVersion}[2]
{\ifthenelse{\boolean{ICRA} }{#1}{}\ifthenelse{\boolean{ARXIV}}{#2}{}}
\begin{document}
\pagestyle{plain}
\pagenumbering{arabic} 

\title
{\LARGE \bf
Asymptotically-Optimal Motion Planning using Lower Bounds on Cost}

\author{Oren Salzman and Dan Halperin$^*$
\thanks{
$^*$
Blavatnik School of Computer Science,
Tel-Aviv University, Israel}
\thanks{
This work has been supported in part 
by the Israel Science Foundation (grant no. 1102/11),
by the German-Israeli Foundation (grant no. 1150-82.6/2011), and
by the Hermann Minkowski--Minerva Center for Geometry at Tel Aviv 
University.}}%

\maketitle
\thispagestyle{empty}
\pagestyle{empty}

\begin{abstract}
Many path-finding algorithms on graphs such as~A* are sped up by using a heuristic function that gives lower bounds on the cost to reach the goal.
Aiming to apply similar techniques to speed up sampling-based motion-planning algorithms, 
we use effective lower bounds on the cost between configurations
to tightly estimate the cost-to-go.
We then use these estimates in an anytime asymptotically-optimal algorithm which we call Motion Planning using Lower Bounds (\LB).
\LB is based on the Fast Marching Trees (FMT*) algorithm~\cite{JP13} recently presented by Janson and Pavone.
An advantage of our approach is that in many cases 
(especially as the number of samples grows) 
the weight of collision detection in the computation is almost negligible with respect to nearest-neighbor calls.
We prove that \LB performs \textbf{no more} collision-detection calls than an anytime version of FMT*.
Additionally, we demonstrate in simulations that for certain scenarios, the algorithmic tools presented here enable efficiently  producing low-cost paths while spending only a small fraction of the running time on collision detection.
\end{abstract}
\IEEEpeerreviewmaketitle

\section{Introduction}
Motion-planning algorithms aim to find a collision-free path for a robot moving amidst obstacles. 
\textVersion
{%
	The most prevalent approach in practice is to use sampling-based techniques~\cite{CBHKKLT05}.%
}
{%
	The general problem is PSPACE-hard when the number of degrees of freedom (DoF) is part of the input~\cite{R79}.
	Thus, the most prevalent approach in practice is to use sampling-based techniques and relaxing completeness to probabilistically completeness~\cite{CBHKKLT05}.%
}
These algorithms sample points in the robot's \emph{configuration-space} (\Cs) and connect close-by configurations to construct a graph called a \emph{roadmap}.
Often, a \emph{low-cost} path is desired, where cost can be measured in terms of, for example, 
\textVersion
{path length or energy consumption along the path.}
{path length, path clearance or energy consumption along the path.}

Sampling-based algorithms rely on two central primitive operations:  
\emph{Collision Detection (CD)}
and 
\emph{Nearest Neighbors (NN)} search.
CD determines whether a configuration is collision-free or not and is often used to assess if a path connecting close-by configurations is collision-free. This latter operation is referred to as \emph{Local Planning}~(LP).
An NN data structure preprocesses a set of points to efficiently answer queries such as 
\textVersion
{``which are the points within radius~$r$ of a given query point?''}
{``which is the closest point'' or ``which are the points within radius $r$'' of a query point?}
In practice, the cost of CD, primarily due to LP calls, often dominates the running time of  sampling-based algorithms, and is typically regarded as the computational bottleneck for such algorithms. 
For a summary of the computational complexity of NN, CD and LP in sampling-based motion-planning algorithms see~\cite{BKOF12}.

\textVersion
{}
{Although sampling-based algorithms have appeared in the literature since the mid 90's (see, e.g.,~\cite{HLM99, KSLO96, KL00} to mention just a few),
only recently an algorithm  that has guarantees on the cost of the produced path has been suggested.}
In their influential work, Karaman and Frazzoli~\cite{KF11} analyzed existing 
sampling-based algorithms (namely PRM~\cite{KSLO96} and RRT~\cite{KL00}) and introduced the notion of \emph{asymptotic optimality} (AO);
an algorithm is said to be AO if the cost of the solution produced by it converges to the cost of the optimal solution if the algorithm is run for sufficiently long time. 
They proposed AO variants of PRM and RRT called PRM* and RRT*, respectively.
\textVersion
{  }
{
In these latter variants, each node in the roadmap should consider connections to all nodes within a neighborhood of radius proportional to $\left( \frac{\log n}{n} \right)^{\frac{1}{d}}$, where $n$ is the number of collision-free samples used by the algorithm and $d$ is the dimension of the \Cs. 
}%
However, the AO of PRM* and RRT*  comes at the cost of increased running time and memory consumption when compared to their non-optimal counterparts.
To reduce this cost, several improvements were proposed which 
modify 
\textVersion{the sampling scheme~\cite{AS11, GSB14},}{the sampling scheme~\cite{AS11, GSB14, INMAH12},} 
the CD~\cite{BKOF12}, 
or relax the optimality to \emph{asymptotic near-optimality} (ANO)~\cite{DB14, LLB13, SH13}.
An algorithm is said to be ANO if, given an approximation factor $\varepsilon$, the cost of a solution returned by the algorithm is guaranteed to converge to within a factor of $1+\varepsilon$ of the cost of the optimal  solution.

Following the introduction of PRM* and RRT*, other AO algorithms have been suggested.%
\textVersion
{
RRT\#~\cite{AT13}, 
extends its roadmap in a similar fashion to RRT*.
However, in contrast to RRT* which only performs \emph{local} rewiring,
RRT\# efficiently propagates changes to \emph{all} the relevant parts of the roadmap.%
}
{
The first, by Arslan and Tsiotras~\cite{AT13} 
borrowed ideas used from the Lifelong Planning A* algorithm~\cite{KLF04}. 
They suggest RRT\# (RRT "sharp"), which also guarantees AO, but, in addition,  ensures that the constructed spanning tree rooted at the initial state contains lowest-cost path information for vertices
which have the potential to be part of the optimal solution.
RRT\# extends its roadmap in a similar fashion to RRT* but adds a replanning procedure.
Thus, in contrast to RRT* which only performs \emph{local} rewiring of the search tree,
RRT\# efficiently propagates changes to \emph{all} the relevant parts of the roadmap.
}
\textVersion{}
{

}
Another AO algorithm, proposed by Janson and Pavone, is the \emph{Fast Marching Trees} (FMT*)~\cite{JP13} algorithm.
\textVersion
{
FMT*, reviewed in detail in Section~\ref{sec:background}, was shown to converge to an optimal solution faster than PRM* or RRT*.
}
{
FMT* is shown to converge to an optimal solution faster than PRM* or RRT*. 
It uses a set of probabilistically-drawn configurations to construct a tree, which grows  in cost-to-come space (see Section~\ref{sec:background} for more details).
}

\vspace{2mm}
\noindent
\textbf{Contribution and paper organization}
We show how by looking at the roadmap induced by a set of samples, we can compute \emph{effective lower bounds} on the cost-to-go of nodes.
This is done without performing expensive LP calls and allows to efficiently guide the search performed by the algorithm. 
We call our scheme Motion Planning using Lower Bounds or \LB for short.


An interesting and useful implication of our approach is that the weight of CD and LP becomes negligible when compared to that of the NN calls\footnote{Throughout this paper, when we say an NN call we mean a call to find all the nodes within radius~$r(n)$ of a given node. $r(n)$ is a radius depending on the number of samples used and will be formally defined in Section~\ref{sec:background}.}.
Bialkowski et al.~\cite{BKOF12} introduced a technique which replaces CD calls by NN calls. Their scheme relies on additional data produced by the CD algorithm used, 
namely, a bound on the clearance of a configuration (if it is collision free) or on its penetration depth (if it is not collision free).
Alas, such a bound is not trivial to compute for some prevalent \Css.
In contrast, our algorithmic framework can be paired with existing off-the-shelf NN, CD and LP procedures.
Thus, our results, which are more general (as they are applicable to general \Css), strengthen the conjecture of Bialkowski et al.~\cite{BKOF12} that NN computation and not CD may be the bottleneck of sampling-based motion-planning algorithms.

This work continues and expands our recent work~\cite{SH13} where we relaxed the AO of RRT* to ANO using lower bounds. 
The novel component here is that multiple nodes are processed \emph{simultaneously} while in RRT* the nodes are processed one at a time. 
This allows to efficiently compute for all nodes an estimation of the cost-to-go which in turn can be used to speed up motion-planning algorithms.

Our framework is demonstrated for the case where distance is the cost function via the FMT* algorithm which is reviewed in Section~\ref{sec:background}.
As we wish to work in an \emph{anytime} setting, we introduce in Section~\ref{sec:aFMT} a straightforward adaptation of FMT* for anytime planning which we call aFMT.
We then proceed to present \LB in Section~\ref{sec:alg}.
We analyze aFMT* and \LB with respect to the amount of calls to the NN and LP procedures in Section~\ref{sec:analysis}
and report on experimental results in Section~\ref{sec:eval}.
Specifically, we demonstrate in simulations that for certain scenarios, 
\LB produces lower-cost paths faster (by a factor of between two and three) than  aFMT*.
We conclude with a discussion and suggestions for future work in Section~\ref{sec:future}.
\section{Terminology and algorithmic background}
\label{sec:background}
We begin by formally stating the motion-planning problem and introducing several 
\textVersion
{procedures used by the  algorithms we present.}
{standard procedures used by sampling-based algorithms.}
We continue by reviewing the FMT* algorithm.

\subsection{Problem definition and terminology}
Let $\calX$, \Cfree denote the Euclidean\footnote{
Although we describe the algorithm for Euclidean spaces, by standard techniques 
\textVersion
{(see, e.g.~\cite[Section~3.5, Section 7.1.2]{CBHKKLT05} or~\cite{K04})}
{(see, e.g.~\cite[Section~3.5, Section 7.1.2]{CBHKKLT05},~\cite[Chapters 4-5]{L06} or~\cite{K04})}
the algorithm can be applied to non-Euclidean spaces such as SE3.
However, the AO proof of FMT*, presented in~\cite{JP13}, is shown only for Euclidean spaces.} 
\Cs and free space, respectively, and $d$ the dimension of the \Cs.
Let $(\Cfree, x_{\text{init}}, \calX_{\text{goal}})$ be the motion-planning problem where:
$x_{\text{init}} \in \Cfree$ is the initial free configuration of the robot and
$\calX_{\text{goal}} \subseteq \Cfree$ is the goal region.
We will make use of the following procedures:
\texttt{sample\_free$(n)$}, a procedure returning $n$ random configurations from \Cfree;
\texttt{nearest\_neighbors}$(x,V,r)$ is a procedure that returns all neighbors of $x$ with distance smaller than $r$ within the set~$V$;
\texttt{collision\_free}$(x,y)$ tests whether the straight-line segment connecting $x$ and $y$ is contained in \Cfree;
\texttt{cost}$(x,y)$ returns the cost of the straight-line  path connecting $x$ and $y$, namely, in our case, the distance.
We consider weighted graphs $\calG = (V,E)$, 
where the weight of an edge $(x,y) \in E$ is  $\texttt{cost}(x,y)$.
Given such a graph~\calG, we  denote by $\texttt{cost}_{\calG}(x, y)$ the cost of the weighted shortest path from $x$ to~$y$.
Let $\texttt{\g}_{\calG}(x)$ be $\texttt{cost}_{\calG}(x_{\text{init}}, x)$ 
and $\texttt{\h}_{\calG}(x)$ be the minimal
$\texttt{cost}_{\calG}(x, x_{\text{goal}})$ for $x_{goal} \in \calX_{\text{goal}}$.
Namely for every node $x$, $\texttt{\g}_{\calG}(x)$ is the minimal cost to reach $x$ from  $x_\text{init}$ and $\texttt{\h}_{\calG}(x)$ is the minimal cost to reach  $\calX_{\text{goal}}$ from $x$.
Additionally, let 
$B_{\calG}(x_{\text{init}},r)$, $B_{\calG}(\calX_{\text{goal}},r)$
be~the set of all nodes whose cost-to-come (respectively, cost-to-go) value on $\calG$ is smaller than $r$. 
Finally, we denote by \texttt{Dijkstra}$(G, x, c)$ an	implementation of Dijkstra's algorithm\footnote{Any other algorithm that computes the shortest path from a single source to all nodes in a graph may be used.} running on the graph $G$ from $x$ until a maximal cost of $c$ has been reached. The algorithm's implementation updates the cost to reach each node from $x$ and outputs the set of nodes traversed.

Given a set of samples $V$, and a radius $r$, we denote by $G(V,r)$ the \emph{disk graph},\footnote{The disk graph is sometimes referred to as the the \emph{neighborhood graph}.} which is the graph whose set of vertices is $V$ and two vertices $x,y \in V$ are connected by an edge if the distance between $x$ and $y$ is less than $r$.

\subsection{Fast Marching Trees (FMT*)}
FMT*, outlined in Alg.~\ref{alg:fmt},  performs a ``lazy'' dynamic programming recursion on a set of sampled configurations to grow a tree rooted at $x_{init}$~\cite{JP12}. 
The algorithm samples~$n$ collision-free nodes $V$ (line 1).
It searches for a path to~$\calX_{goal}$ by building a minimum-cost spanning tree growing in cost-to-come space (line~2 and detailed in Alg.~\ref{alg:search}).
As we explain in Section~\ref{sec:alg}, the algorithm may benefit from using 
a heuristic function estimating the cost-to-go of a node and 
a bound on the maximal length of the path that should be found.
As these are not part of the original formulation of FMT*,
we describe the search procedure of FMT* using a cost-to-go estimation of zero for each node and an unbounded  maximal path length 
(marked in red in Alg.~\ref{alg:fmt} and~\ref{alg:search}).
This will allow us to use the same pseudo-code of Alg.~\ref{alg:search} to explain the \LB algorithm in Section~\ref{sec:alg}.

The search-tree is built by maintaining two sets of nodes $H, W$ such that 
$H$ is the set of nodes added to the tree that may be expanded and $W$ is the set of nodes that have not yet been added to the tree~(Alg.~\ref{alg:search}, line~1).
It then computes for each node the set of nearest neighbors\footnote{The nearest-neighbor computation can be delayed and performed only when a node is processed but we present the batched mode of computation to simplify the exposition.} of radius~$r(n)$~(line~3).
The algorithm repeats the following process: the node $z$ with the lowest cost-to-come value is chosen from $H$ (line 4 and 16).
For each neighbor $x$ of $z$ that is not already in $H$, the algorithm finds its neighbor $y \in H$ such that the cost-to-come of $y$ added to the distance between $y$ and $x$ is minimal~(lines~7-9).
If the local path between $y$ and $x$ is free, $x$ is added to~$H$ with $y$ as its parent~(lines~10-12).
At the end of each iteration $z$ is removed from~$H$~(line~13).
The algorithm runs until a solution is found or there are no more nodes to process.

To ensure AO, the radius $r(n)$ used by the algorithm is
\begin{equation}
\label{eq:r}
r(n) = (1 + \eta) 
				\cdot 
				2 \left( \frac{1}{d}\right)^{\frac{1}{d}}
				\left( \frac{\mu (\Cfree)}{\zeta_d}\right)^{\frac{1}{d}}
				\left( \frac{\log n}{n} \right)^{\frac{1}{d}},	
\end{equation}
\noindent 
where
$\eta > 0 $ is some small constant, $\mu(\cdot)$ denotes the $d$-dimensional Lebesgue measure and $\zeta_d$ is the volume of the unit ball in the $d$-dimensional Euclidean space.
\textVersion
{}
{This value is smaller than the radius used by Karaman and Frazzoli~\cite{KF11} due to the different definition of AO used by Janson and Pavone.
Specifically, Karaman and Frazzoli~\cite{KF11} use the notion of 
\emph{convergence almost everywhere}
while Janson and Pavone~\cite{JP13} use the (weaker) notion of 
\emph{convergence in probability}.}

\begin{algorithm}[t, b]
\caption{FMT* $(x_{init}, \calX_{goal}, n)$}
\label{alg:fmt}
\begin{algorithmic}[1]
	\STATE	$V \leftarrow \set{x_{\text{init}}} \cup \texttt{sample\_free}(n)$;
					$E \leftarrow \emptyset$;
					$\calT\leftarrow (V,E)$
  \STATE	PATH $\leftarrow$ \texttt{search} 
  								$(\calT, \calX_{goal}, {\color{red} 0 , \infty})$
  				\hspace{1mm} // See Alg.~\ref{alg:search}
  \RETURN PATH \hspace{3mm}
\end{algorithmic}
\end{algorithm}

\begin{algorithm}[t, b]
\caption{search $(\calT, \calX_{goal}, {\color{red} \texttt{cost\_to\_go}, c_{max}})$}
\label{alg:search}
\begin{algorithmic}[1]
  \STATE	$W \leftarrow V \setminus \set{x_{\text{init}}}$;
  				\hspace{3mm}
					$H \leftarrow \set{x_{\text{init}}}$
  \FORALL{$v \in V$} 
  	\STATE	$N_v \leftarrow 
  					\texttt{nearest\_neighbors}(V \setminus \set{v}, v, r(n))$
  \ENDFOR
  

  \STATE	$z \leftarrow x_{\text{init}}$
	\WHILE {$z \notin \calX_{\text{Goal}}	$}
		\STATE $H_{\text{new}} \leftarrow \emptyset$;
					 \hspace{3mm}
					 $X_{\text{near}} \leftarrow W \cap N_z$
		

		\FOR  {$x \in X_{\text{near}}$}
			\STATE $Y_{\text{near}} \leftarrow H \cap N_x$
							\hspace{3mm}
			\STATE $y_{\text{min}} \leftarrow \arg \min_{y \in Y_{\text{near}}} 
								\set{\texttt{cost}_{\calT}(y) + \texttt{dist}(y,x)} $
								
			
			\IF {\texttt{collision\_free}$(y_{\text{min}}, x)$}
				\STATE $\calT.\texttt{parent}(x) \leftarrow y_{\text{min}}$
				\STATE $H_{\text{new}} \leftarrow H_{\text{new}} \cup \set{x}$;
							 \hspace{3mm}
					 		 $W \leftarrow W \setminus \set{x}$
			\ENDIF
		\ENDFOR
								
					
		\STATE $H \leftarrow (H \cup H_{\text{new}}) \setminus \set{z}$
		\IF {$H = \emptyset$}
			\RETURN FAILURE
		\ENDIF

		\STATE $z \leftarrow \arg \min_{y \in H}
								\set{\texttt{cost$_{\calT}$}(y) + 
											{\color{red}		\texttt{cost\_to\_go}(y)} }$
		{\color{red}														
		\IF {$\texttt{cost$_{\calT}$}(z) + \texttt{cost\_to\_go}(z) \geq c_{max}$ }
			\RETURN FAILURE
		\ENDIF
	}
	\ENDWHILE

	\RETURN PATH \hspace{3mm}
	 
\end{algorithmic}
\end{algorithm}						

\section{Anytime FMT* (aFMT*)}
\label{sec:aFMT}
An algorithm is said to be \emph{anytime} if it yields meaningful results even after a short time and it improves the quality of the solution as more computation time is available.
We outline a straightforward enhancement to FMT* to make it anytime.
As noted in previous work (see, e.g.,~\cite{WBC13}) one can turn a batch algorithm into an anytime one by the following general approach:
choose an initial small number of samples $n=n_0$ and apply the algorithm. 
As long as time permits, double $n$ and repeat the process. 
The total running time is less than twice that of the running time for the largest $n$.
Note that as FMT* is AO, aFMT* is also~AO.

We can further speed up this method by reusing both existing samples and connections from previous iterations.
\textVersion
{Due to lack of space, we omit these details and refer the interested reader to the extended version of our paper~\cite{SH14}.}
{
Notice that this improvement does not change the asymptotic running time.
It affects only the constants of the running time.
Assume the algorithm was run with $n$ samples and now we wish to re-run it with $2n$ samples.
In order to obtain the $2n$ random samples, we take the $n$ random samples from the previous iteration together with $n$ new additional random samples.
For each node that was used in iteration $i-1$, around half of its neighbors in iteration $i$ are nodes from iteration $i-1$ and half of its neighbors are newly-sampled nodes.
Thus, if we save the results of  calls to the local planner, we can cache them to be reused in future iterations.
}

\section{Algorithmic framework}
\label{sec:alg}
We are now ready to present our approach to exploiting lower bounds on cost in order to speed up sampling-based motion-planning algorithms.

Given a random infinite sequence of collision-free samples 
$S = s_1, s_2 \ldots$
denote by 
$V_i(S)$ the set of the first $2^i$ elements of $S$.
Let $\calG_i(S) = G(V_i(S), r(|V_i(S)|))$ and 
let $\calH_i(S) \subseteq \calG_i(S)$ be the subgraph containing collision-free edges only 
(here $r(n)$ is the radius defined in Eq.~\ref{eq:r}).
For brevity, we omit $S$ when referring to $V_i(S), \calG_i(S)$ and  $\calH_i(S)$.
Moreover, when we compare our algorithm to the aFMT* algorithm, we do so for runs on the same random infinite sequence $S$.
Clearly, for any two nodes $x, y \in V_i$, 
$
\texttt{cost}_{\calG_i}(x, y) 
	\leq  
\texttt{cost}_{\calH_i}(x, y)
$.
Thus for any node $x \in V_i$,
$
\texttt{\h}_{\calG_i}(x) \leq \texttt{\h}_{\calH_i}(x).
$
Namely, the cost-to-go computed using the disk graph~$\calG_i$ is a lower bound on the cost-to-go that may be obtained using $\calH_i$.
We call this the \emph{lower bound property}.
For an illustration, see Fig.~\ref{fig:cspace}.

\begin{figure}[t,b]
  \centering
  	\vspace{-5mm}
	 \subfloat
   [\sf ]
   {
   	\includegraphics[height =3.0 cm]{./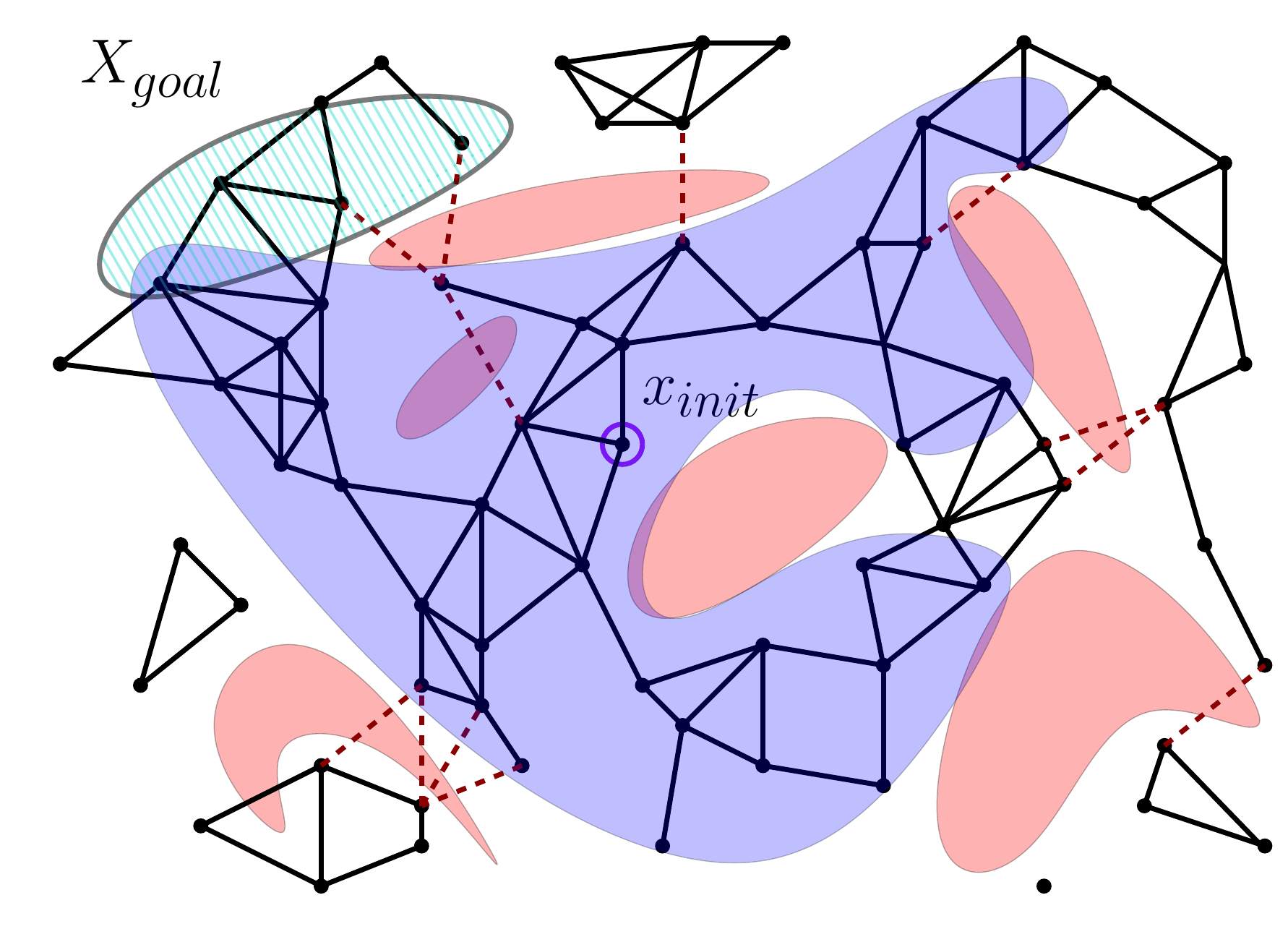}
   	\label{fig:corr_time}
   }
   \subfloat
   [\sf ]
   {
   	\includegraphics[height =3.0 cm]{./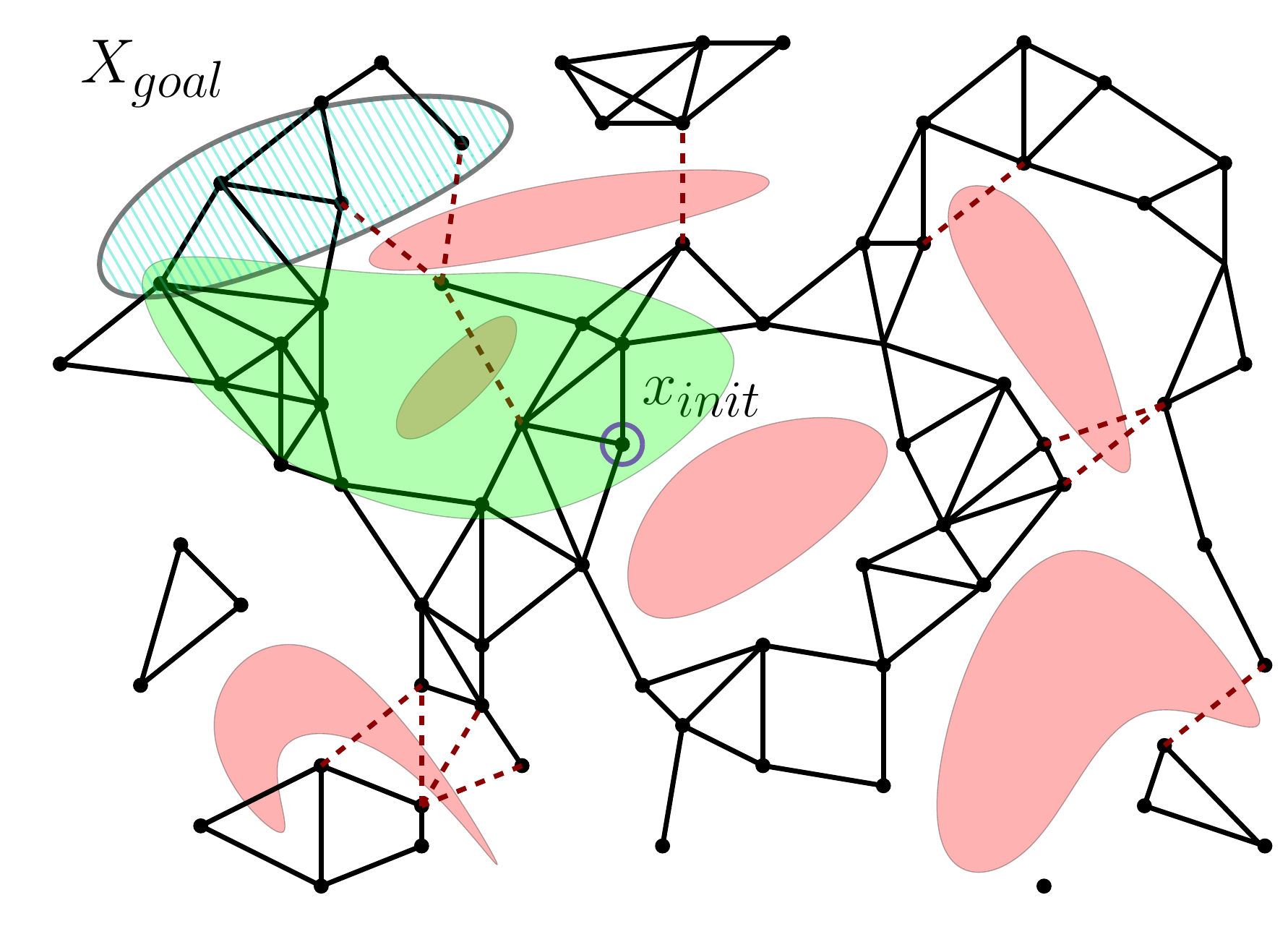}
   	\label{fig:corr_success}
   }
  \caption{\sf 	\footnotesize
  							This figure demonstrates that the part of the tree expanded 
  							when 
  							searching in	cost-to-come space 
  							(shaded blue region, Fig.~(a)) is larger than 
  							the one expanded when searching in 
  							cost-to-come+cost-to-go space 
  							(shaded green region, Fig.~(b)).
  							Obstacles in the \Cs are depicted in red, start location and 
  							goal region are depicted by a purple circle and a turquoise 
  							region, respectively.
  							Edges of the disk graph~$\calG_i$ that are 
  							contained and not contained
  							in $\calH_i$ are depicted in black and dashed red, 
  							respectively. 
  							The figure is best viewed in color.}
  \label{fig:cspace}
	\vspace{-1mm}
\end{figure}

\begin{algorithm}[b]
\caption{MPLB $(x_{init}, \calX_{goal}, n_0)$}
\label{alg:MPLB}
\begin{algorithmic}[1]
	\STATE 	$V \leftarrow \set{x_{\text{init}}}$;
					\hspace{1mm}
					$n \leftarrow n_0$;
					\hspace{1mm}
					$c_{prev} \leftarrow \infty$
	\WHILE {\texttt{time\_permits()}}
		\STATE $V \leftarrow V \cup \texttt{sample\_free}(n)$;
					\hspace{1mm}
					$E \leftarrow \emptyset$;
					\hspace{1mm}
					$\calT\leftarrow (V,E)$
		\STATE \texttt{estimate\_cost\_to\_go}$(V, x_{init}, \calX_{goal}, c_{prev})$
		\STATE PATH $\leftarrow$ \texttt{search} 
  					$(\calT, \calX_{goal}, {\color{red} \texttt{cost\_to\_go} , c_{prev}})$
		\STATE $n \leftarrow 2n$;
					 \hspace{1mm}
					 $c_{prev} = \texttt{cost}$(PATH)
	\ENDWHILE
  \RETURN PATH 
\end{algorithmic}
\end{algorithm}						

\begin{algorithm}[t, b]
\caption{estimate\_cost\_to\_go $(V, x_{init}, \calX_{goal}, c)$}
\label{alg:preproc}
\begin{algorithmic}[1]
	\STATE $V_{\text{preproc}} \leftarrow \texttt{Dijkstra} 
			(G(V, r(|V|)), x_{init}, \frac{c}{2})$
	\STATE $V_{\text{preproc}} \leftarrow V_{\text{preproc}} \cup \texttt{Dijkstra} 
			(G(V, r(|V|)), \calX_{goal}, \frac{c}{2})$
	\FOR {$x \in V \setminus V_{\text{preproc}}$}
		\STATE cost\_to\_go$(x) \leftarrow \infty $
	\ENDFOR
	\STATE $\texttt{Dijkstra} 
	(G(V_{\text{preproc}}, r(|V_{\text{preproc}}|)), \calX_{goal}, c)$
\end{algorithmic}
\end{algorithm}						

We present Motion Planning using Lower Bounds, or \LB (outlined in Alg.~\ref{alg:MPLB}). 
Similar to aFMT*, the algorithm runs in iterations and at the $i$'th iteration, uses $V_i$ as its set of samples.
Unlike aFMT*, each iteration consists of 
a \emph{preprocessing phase} (line~4) of computing a lower bound on the cost-to-go values
and a \emph{searching phase} (line~5) where a modified version of FMT* is used.

Let $c_{i}(\text{ALG})$ denote the cost of the solution obtained by an algorithm ALG using $V_i$ as the set of samples 
(set $c_{0}(\text{ALG}) \leftarrow \infty$).
We now show that only a subset of the nodes sampled in each iteration need to be considered. 
We then proceed to describe the two phases of \LB.

\subsection{Promising nodes}

We use the lower bound property to consider only a \emph{subset} of $V_i$ that will be used in the $i$'th iteration.
Intuitively, we only wish to consider nodes that may produce a solution that is better than the solution obtained in previous iterations.
This leads us to the definition of promising~nodes:
\begin{definition}
A node $x\!\in\!V_i$ is 
\emph{promising} 
(at iteration $i$) if 
\vspace{-3mm}
\[
	\texttt{\g}_{\calH_i}(x) + \texttt{\h}_{\calH_i}(x)\!<\!\clb.
\]
\end{definition}
\vspace{1mm}

In the preprocessing phase, \LB will traverse $\calG_i$ (and not~$\calH_i$) to collect a set of nodes that contains all promising nodes (and possibly other nodes), compute a lower bound on their cost-to-go and use this set in the searching phase.

\subsection{Preprocessing phase: Estimating the cost-to-go}
Recall that in the preprocessing phase, outlined in Alg.~\ref{alg:preproc}, we wish to compute a lower bound on the cost-to-go for (a subset of) nodes $x \in V_i$.
Specifically, the only nodes we wish to consider are \emph{promising nodes}.
This is done by collecting the set of nodes
$V_{\text{preproc}} =  
B_{\calG_i} \left( x_{\text{init}}, \frac{\clb}{2} \right)
\cup
B_{\calG_i} \left( \calX_{\text{goal}}, \frac{\clb}{2} \right)
$.
Namely, by performing 
one traversal from $x_{\text{init}}$ (line~1)
and
one traversal from $\calX_{\text{goal}}$ (line~2), 
all nodes such that 
$\texttt{\g}_{\calG_i} \leq \frac{\clb}{2}$ or
$\texttt{\h}_{\calG_i} \leq \frac{\clb}{2}$ are found.
Clearly, any node \emph{not} in either set is not promising (lines 3-4).

After collecting all nodes in $V_{\text{preproc}}$, \LB computes 
the distance of every such node from $\calX_{goal}$ (line 5).
This is done by running a shortest paths algorithm on the graph $\calG_i$ restricted to the nodes in $V_{\text{preproc}}$.
This distance is stored for each node and will be used as a lower bound on the cost-to-go.
We note that this preprocessing phase only uses NN calls and does not use any CD calls (as there are no LP calls).

\subsection{Searching phase: Using cost-to-go estimations}
The lower bounds computed in the preprocessing phase allow for two algorithmic enhancements to the searching phase when compared to aFMT*:
(i)~incorporating the cost-to-go estimation in the ordering scheme of the nodes
and
(ii)~discarding nodes that are found to be not promising.

\vspace{2mm}
\noindent
\textbf{Node ordering:}
Recall that in aFMT*, $H$ is the set of nodes added to the tree that may be expanded and that these nodes are ordered according to their cost-to-come value (Alg.~\ref{alg:search}, line~16).
Instead, we suggest using the cost-to-come added to the cost-to-go estimation to order the nodes in~$H$.
This follows exactly the formulation of A*~\cite{P84} which performs a Dijkstra-like search on a set of nodes. 
The nodes that were encountered but not processed yet ($H$ in our setting) are ordered according to a cost function $f() = g() + h()$.
Here, 
$g(x)$ is the (computed) cost-to-come value of $x$ 
($\texttt{\g}_{\calH_i}(x)$ in our case) and 
$h$ is a lower bound on the cost-to-go of $x$ to the goal
($\texttt{\h}_{\calG_i}(x)$ in our case).
aFMT* essentially uses the trivial heuristic $h = 0$.
Instead, we suggest to use a much sharper bound to speed up the search towards the goal.

\vspace{2mm}
\noindent
\textbf{Discarding nodes:}
In the preprocessing stage \LB computes a set of nodes that \emph{may} be promising, though for each such node, the cost-to-come value was not computed.
In the searching phase, once a node is added to the tree, its cost-to-come value will not change in the current iteration.
Thus, every node $x$ added to the tree with 
$
	\texttt{\g}_{\calH_i}(x) 
	 	+ 
	\texttt{\h}_{\calG_i}(x) 	
		\geq
	\clb
$
is discarded as it cannot be  promising.
This implies that  \LB will terminate an iteration when it is evident that the previous iteration's solution cannot be improved (see 
Alg.~\ref{alg:search}, lines~17-18).

In Section~\ref{sec:eval} we demonstrate through various simulations that  using lower bounds has a significant effect on the running time of the algorithm in practice.
Ordering the nodes using a heuristic that tightly estimates the cost-to-go allows \LB to expand a smaller portion of the nodes~$V_i$ while discarding nodes allows to focus the search only on nodes that may potentially improve the existing solution.

\section{Comparative analysis and Discussion}
\label{sec:analysis}
We compare aFMT* and \LB with respect to the size of the tree constructed in the searching phase and with respect to the primitive procedures, namely NN and LP.
\textVersion
{}
{We assume that the computational cost of the graph traversal algorithms is negligible.}
This is done by quantifying the number of NN and LP calls  performed by both algorithms and allows us to discuss the fundamental differences between the two algorithms.


Let 
$\#_{\texttt{NN}, i}(\text{ALG})$,
$\#_{\texttt{LP}, i}(\text{ALG})$
denote the number of NN and LP calls  performed by an algorithm ALG in iteration~$i$, respectively for a fixed sequence of samples~$S$. 
Recall that when comparing the two algorithms, it is done for the same sequence~$S$.

\subsection{Search-tree size}
Let $V_i(\text{ALG}) \subseteq V_i$ denote the set of nodes in the tree in the $i$'th iteration of an algorithm $\text{ALG}$.
\begin{lem}
	At every iteration, the set of nodes traversed in \LB's searching phase 
	is not larger than that of aFMT*.
\end{lem}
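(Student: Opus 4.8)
The plan is to prove the stronger statement that the set of nodes traversed by \LB's searching phase is \emph{contained} in that of aFMT*; the size bound then follows immediately. First I would observe that both searching phases execute the same procedure (Alg.~\ref{alg:search}) on the identical sample set $V_i$ with identical neighbour sets $N_v$, so the only differences are (i)~the key used to order $H$ and (ii)~the cutoff $c_{max}$: aFMT* uses the heuristic $h\equiv 0$ and $c_{max}=\infty$, whereas \LB uses $h(x)=\texttt{\h}_{\calG_i}(x)$ and $c_{max}=\clb$. I would record two properties of this heuristic. It is \emph{admissible}, $\texttt{\h}_{\calG_i}(x)\le \texttt{\h}_{\calH_i}(x)$, which is exactly the lower-bound property; and it is \emph{consistent}, since $\texttt{\h}_{\calG_i}$ is a shortest-path-to-goal function on $\calG_i$ and every tree edge $(y,x)$ lies in $\calH_i\subseteq\calG_i$, so that $\texttt{\h}_{\calG_i}(y)\le \texttt{dist}(y,x)+\texttt{\h}_{\calG_i}(x)$.

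The key structural step is to show that, because $h$ is consistent, \LB finalises every node it traverses with the same cost-to-come value $\texttt{\g}_{\calH_i}(x)$ and the same parent as aFMT*; equivalently, \LB's tree is a subtree of aFMT*'s. The standard Dijkstra/A* equivalence of computed distances gives this at once for ordinary graph search, and I would adapt it to FMT*'s lazy single-collision-check rule by arguing that consistency guarantees that at the moment \LB first reaches a node $x$ (when the first of its neighbours is expanded), the optimal admissible parent of $x$ already lies in $H$; hence the lazily chosen parent $y_{\min}$, and therefore the single collision test performed, coincide with those of aFMT*. This is the step I expect to be the main obstacle, precisely because the lazy rule makes the constructed tree a priori order-dependent, so the consistency argument has to be pushed through the parent-selection and collision-test logic rather than invoked as a black box.

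With the trees identified, I would characterise the traversed sets by their expansion thresholds. aFMT* runs until it pops a goal node, so it expands exactly the nodes reached through expansions of nodes with $\texttt{\g}_{\calH_i}\le\cfmt$. \LB stops either upon popping a goal node or when the queue minimum $z$ satisfies $\texttt{\g}_{\calH_i}(z)+\texttt{\h}_{\calG_i}(z)\ge\clb$, so every node \LB \emph{expands} has $f=\texttt{\g}_{\calH_i}+\texttt{\h}_{\calG_i}$ below its stopping value, and since $h\ge 0$ it has $\texttt{\g}_{\calH_i}$ below that value as well.

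It remains to bound \LB's stopping value by $\cfmt$, which I would do by two cases. If \LB reaches the goal, then it has found a solution of cost $\clbi=\cfmt$ (same tree), and each expanded node has $\texttt{\g}_{\calH_i}\le f\le\cfmt$. If instead \LB stops on the cutoff, then the FMT* tree on $\calH_i$ contains no goal node of cost below $\clb$, whence $\cfmt\ge\clb$, and each expanded node has $\texttt{\g}_{\calH_i}<\clb\le\cfmt$. In either case the nodes \LB expands form a subset of those aFMT* expands; since in both algorithms a node enters the tree exactly when one of its neighbours is expanded and (by the tree-identification step) the resulting parent and collision outcome agree, the nodes \LB adds are a subset of those aFMT* adds. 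Hence $V_i(\text{\LB})\subseteq V_i(\text{aFMT*})$, and in particular the traversed set of \LB is no larger, as claimed.
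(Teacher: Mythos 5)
Your two preliminary observations are sound: $\texttt{\h}_{\calG_i}$ is indeed admissible (this is the paper's lower-bound property) and consistent (every edge of $\calH_i$ is an edge of $\calG_i$, on which $\texttt{\h}_{\calG_i}$ is a shortest-path function). The genuine gap is exactly the step you flagged as the main obstacle: consistency does \emph{not} force \LB to pick the same parent $y_{\min}$, nor to perform the same collision test, as aFMT*, and the two trees need not coincide. Concretely, take five configurations forming a cycle $x_{\text{init}},a,x,y^*,c$ (consecutive pairs within radius $r(n)$, non-consecutive pairs farther, all five edges collision-free), plus a chain of samples from $x$ to the goal, with lengths chosen so that $\texttt{dist}(x_{\text{init}},a)<\texttt{dist}(x_{\text{init}},c)$ but $\texttt{dist}(x_{\text{init}},c)+\texttt{dist}(c,y^*)+\texttt{dist}(y^*,x)<\texttt{dist}(x_{\text{init}},a)+\texttt{dist}(a,x)$; since $a$'s only neighbors are $x_{\text{init}}$ and $x$, its shortest route to the goal in $\calG_i$ passes back through $x$, so $a$ has the largest key $g+h$. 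In aFMT* (pops ordered by cost-to-come) $a$ is popped before $c$ and $y^*$; at that moment $a$ is the only tree node adjacent to $x$, so aFMT* tests and keeps the edge $(a,x)$, and $x$ is never rewired. In \LB, $c$ and then $y^*$ are popped before $a$; when $y^*$ is popped, both $a$ and $y^*$ lie in $H\cap N_x$, the minimizer is $y^*$, and \LB tests and keeps the edge $(y^*,x)$. The parents differ, the computed cost-to-come values differ, the LP calls differ, and in fact $\clbi<\cfmt$, contradicting your ``same tree, hence $\clbi=\cfmt$'' step. The root cause is that the lazy single-check rule makes the candidate set $H\cap N_x$ at the first consideration of $x$ order-dependent, and no property of the heuristic controls that set; your sub-claim that the optimal parent of $x$ is already in $H$ when \LB first reaches $x$ fails for the same reason (this is precisely the known suboptimality of the lazy rule, which persists under any ordering).

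The paper's proof never attempts this identification, and that is how it avoids the trap. It characterizes each tree by a threshold condition on idealized costs: every node of aFMT*'s tree satisfies $\texttt{\g}_{\calH_i}(x)\le\cfmt$, and every node traversed by \LB satisfies $\texttt{\g}_{\calH_i}(x)+\texttt{\h}_{\calG_i}(x)\le\min\{\clb,\cfmt\}$ (admissibility giving the $\cfmt$ bound, node discarding giving the $\clb$ bound); the cardinality inequality then follows from $\texttt{\h}_{\calG_i}\ge 0$. In effect the paper works under the idealization that the cost-to-come computed by \emph{either} algorithm equals $\texttt{\g}_{\calH_i}$ --- it asserts this explicitly in its LP lemma --- rather than trying to derive inter-algorithm agreement from consistency. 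Your closing paragraphs (threshold characterization, two-case bound on \LB's stopping value, $h\ge 0$) mirror that argument, but in your write-up they rest on the tree-identification claim, which is false. Note also that the stronger set containment $V_i(\text{\LB})\subseteq V_i(\text{aFMT*})$ you aim for can itself fail under the exact semantics of the lazy rule: the order-dependent collision tests can admit into \LB's tree a node all of whose tests in aFMT* happen to hit blocked edges. So strengthening the lemma to containment is not a safe shortcut here; the cardinality statement under the paper's idealization is what the paper actually proves.
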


\begin{proof}
Every node $x$ in the tree of aFMT* has cost-to-come not larger than \cfmt.
Thus, the size of the search-tree of aFMT* is:
$
	|V_{i}(\text{aFMT*})| = \\
							|\set{	x \in V_i \ | \ 
										\texttt{\g}_{\calH_i}(x) \leq \cfmt
									}|.
$

Similar  
to aFMT*,
each node $x$ traversed by \LB in the searching phase has 
$\texttt{\g}_{\calH_i}(x) +  \texttt{\h}_{\calG_i}(x) \leq \cfmt$.
Additionally, due to node discarding (see Section~\ref{sec:alg}), 
$\texttt{\g}_{\calH_i}(x) +  \texttt{\h}_{\calG_i}(x) \leq \clb$.
Thus, 
 the size of the search-tree of \LB is:
$
	|V_{i}(\text{\LB})| = \\ 
	| \{	x \in V_i \ | \ 
									\texttt{\g}_{\calH_i}(x) + \texttt{\h}_{\calG_i}(x)
										$ \\ $\leq  
							\min \{ \clb	, \cfmt \}
								\}|.
$\\
Namely, $|V_{i}(\text{\LB})| \leq |V_{i}(\text{aFMT*})|$.

\end{proof}
\vspace{-3mm}
\subsection{Nearest neighbor calls (NN)}
\textVersion{To quantify the number of NN queries performed by each algorithm we note the following observations (explained in detail in the extended version of this paper~\cite{SH14} due to lack of space):
\begin{obs}
\label{obs:nn_fmt}
The number of NN calls performed by aFMT* can be bounded from below as follows:  
$\#_{\texttt{NN},i}(\text{aFMT}^*) \geq |V_i(\text{aFMT*})|$.
\end{obs}
\begin{obs}
\label{obs:nn_mplb}
The number of NN calls performed by \LB is: 
$
	\#_{\texttt{NN},i}(\text{\LB}) = $\\$
	\left| \left\{ x \in V_i| x
	\in\!B_{\calG_i}\!\left( x_{\text{init}}, \frac{\clb}{2} \right)
	\!\cup\!
	B_{\calG_i}\!\left( \calX_{\text{goal}}, \frac{\clb}{2} \right) \right\}\right|.
$ 
\end{obs}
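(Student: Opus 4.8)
The plan is to count NN calls by tracking exactly which nodes ever have their neighborhood requested, and to show this set is precisely $V_{\text{preproc}}$. Recall from the footnote that the neighbor computation is performed \emph{lazily}, i.e., only when a node is processed, so the total number of NN calls equals the number of \emph{distinct} nodes for which a radius query is issued (results being cached). Since \texttt{estimate\_cost\_to\_go} is invoked in Alg.~\ref{alg:MPLB} with $c_{prev}=\clb$, the preprocessing set is $V_{\text{preproc}} = B_{\calG_i}\!\left(x_{\text{init}}, \frac{\clb}{2}\right)\cup B_{\calG_i}\!\left(\calX_{\text{goal}}, \frac{\clb}{2}\right)$, which is exactly the right-hand side of the claimed identity. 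Thus it suffices to prove that \LB issues one cached NN call for each node of $V_{\text{preproc}}$ and none for any node outside it.

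First I would analyze lines~1--2 of Alg.~\ref{alg:preproc}. Both run Dijkstra on the disk graph $\calG_i=G(V_i,r(n))$, so whenever a node is finalized and its edges relaxed, the algorithm must fetch that node's neighbors within radius $r(n)$ — precisely one NN call. A node is finalized by the traversal from $x_{\text{init}}$ (resp. $\calX_{\text{goal}}$) if and only if its $\texttt{\g}_{\calG_i}$ (resp. $\texttt{\h}_{\calG_i}$) value is at most $\frac{\clb}{2}$; a node merely discovered on the frontier with larger value is pushed on the queue but never expanded, hence incurs no NN call. Therefore the set of expanded (``traversed'') nodes across lines~1--2 is exactly $V_{\text{preproc}}$, and caching the neighbor lists ensures a node lying in both balls is queried only once, accounting for exactly $|V_{\text{preproc}}|$ distinct NN calls. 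I would then argue that line~5 and the searching phase issue no call on any node outside $V_{\text{preproc}}$ and reuse the already-computed lists for nodes inside it: line~5 runs Dijkstra on the subgraph induced by $V_{\text{preproc}}$, so it never examines a vertex outside that set; and in Alg.~\ref{alg:search} the node-discarding rule inserts only promising nodes into the tree, every promising node lying in $V_{\text{preproc}}$ by the lower-bound property, so each expanded node's neighbor list is obtained by filtering a cached list rather than by a fresh query.

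The delicate point — and the step I expect to require the most care — is the bookkeeping of the cache across the change of radius between lines~1--2 (radius $r(n)$) and line~5 (radius $r(|V_{\text{preproc}}|)$), together with the claim that the search genuinely reuses stored neighborhoods instead of recomputing them. Because $r(\cdot)$ is decreasing for large arguments and $|V_{\text{preproc}}|\le|V_i|$, we have $r(n)\le r(|V_{\text{preproc}}|)$, so the $r(n)$-neighborhood of a node is contained in its $r(|V_{\text{preproc}}|)$-neighborhood; I would exploit this containment to maintain a single neighbor-list table, keyed by node and populated the first time a node is touched at the largest radius it will ever need, and then consulted (and filtered by distance and by the current vertex set) on every subsequent access. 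With this discipline no node of $V_{\text{preproc}}$ is charged twice and no node outside $V_{\text{preproc}}$ is charged at all, which upgrades the inequality to the stated \emph{equality} $\#_{\texttt{NN},i}(\text{\LB})=|V_{\text{preproc}}|$; substituting $c=\clb$ gives exactly the claimed expression.
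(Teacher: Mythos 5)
Your counting skeleton is the same as the paper's: one NN call per node expanded by the Dijkstra traversals of the preprocessing phase (and you correctly identify that the expanded nodes are exactly
$V_{\text{preproc}} = B_{\calG_i}\left( x_{\text{init}}, \frac{\clb}{2} \right) \cup B_{\calG_i}\left( \calX_{\text{goal}}, \frac{\clb}{2} \right)$,
frontier nodes being discovered but never expanded), plus the claim that the searching phase adds no calls because it only touches nodes already traversed in preprocessing. Up to that point you and the paper agree.

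The gap is in the step you yourself flag as delicate, and your resolution of it is backwards. In lines~1--2 of Alg.~\ref{alg:preproc} the queries are issued at radius $r(|V_i|)$, while line~5 runs Dijkstra on $G(V_{\text{preproc}}, r(|V_{\text{preproc}}|))$ with $r(|V_{\text{preproc}}|) \geq r(|V_i|)$. The containment you derive --- each $r(|V_i|)$-neighborhood is contained in the corresponding $r(|V_{\text{preproc}}|)$-neighborhood --- is true, but it works against you: a cached list can only be \emph{filtered down} to a smaller radius, never extended to a larger one, so the lists cached in lines~1--2 cannot answer the larger-radius queries of line~5. Your proposed discipline of populating the table ``the first time a node is touched at the largest radius it will ever need'' is not implementable either, because at the moment of first touch (during lines~1--2) the set $V_{\text{preproc}}$, and hence the radius $r(|V_{\text{preproc}}|)$, is not yet known. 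Under your own accounting (one call per distinct cached query), line~5 would therefore incur up to $|V_{\text{preproc}}|$ additional calls, giving only $\#_{\texttt{NN},i}(\text{\LB}) \leq 2\,|V_{\text{preproc}}|$ rather than the claimed equality. The paper never attempts your cross-radius cache argument; its proof simply charges one NN call per node traversed anywhere in the preprocessing phase and notes that the searching phase reuses those nodes, i.e., the observation is stated at the granularity of ``one call per traversed node,'' with all neighborhood information for a node regarded as obtained when that node is first traversed.
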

\vspace{2mm}
Thus, \LB \emph{may} perform more NN queries than FMT*.
}
{
For every node in the search tree (of either aFMT* or \LB), there is an NN call (see line~6 in Algorithm~\ref{alg:search}). There may be additional NN calls for nodes that are neighbors of nodes in the search tree (see line~8 in Algorithm~\ref{alg:search}).
Thus,
\begin{obs}
\label{obs:nn_fmt}
The number of NN calls performed by aFMT* can be bounded from below as follows:  
$\#_{\texttt{NN},i}(\text{aFMT}^*) \geq |V_i(\text{aFMT*})|$.
\end{obs}
The \LB algorithm has additional NN calls due to the preprocessing stage.
More specifically, for each node traversed in the preprocessing phase, there is one NN call. Note that in the searching phase, \LB uses only nodes traversed in the preprocessing phase, hence, there will be no additional NN calls in the searching phase.
Thus,
\begin{obs}
\label{obs:nn_mplb}
The number of NN calls performed by \LB is:  \\
$
	\#_{\texttt{NN},i}(\text{\LB}) = $\\$
	\left| \left\{ x \in V_i| x
	\in\!B_{\calG_i}\!\left( x_{\text{init}}, \frac{\clb}{2} \right)
	\!\cup\!
	B_{\calG_i}\!\left( \calX_{\text{goal}}, \frac{\clb}{2} \right) \right\}\right|.
$ 
\end{obs}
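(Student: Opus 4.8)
The plan is to count exactly the operations that the footnote identifies as nearest-neighbor queries---lookups of all samples within radius $r(\cdot)$ of a given node---and to show that, under lazy (on-demand) neighbor computation with caching, the distinct such queries issued during iteration~$i$ of \LB are in bijection with the nodes of $V_{\text{preproc}}$. First I would record the identity $G(V, r(|V|)) = \calG_i$, which holds because at iteration~$i$ we have $V = V_i$ and hence $r(|V|) = r(|V_i|)$; this lets me read the two traversals of Alg.~\ref{alg:preproc} as running directly on the disk graph $\calG_i$ on which the balls $B_{\calG_i}$ are defined.

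Next I would pin down the node set discovered in the preprocessing phase. By the stated semantics of \texttt{Dijkstra}$(G,x,c)$ (it processes nodes in nondecreasing order of cost-to-come from $x$ and outputs exactly the set traversed before its cutoff is reached), the first call of Alg.~\ref{alg:preproc}, run with cutoff $\frac{c}{2}$, traverses precisely the nodes whose cost-to-come on $\calG_i$ lies below $\frac{c}{2}$, i.e.\ $B_{\calG_i}(x_{\text{init}}, \frac{c}{2})$, and symmetrically the second call contributes $B_{\calG_i}(\calX_{\text{goal}}, \frac{c}{2})$. Since Alg.~\ref{alg:MPLB} passes $c = c_{prev} = \clb$, this gives $V_{\text{preproc}} = B_{\calG_i}(x_{\text{init}}, \frac{\clb}{2}) \cup B_{\calG_i}(\calX_{\text{goal}}, \frac{\clb}{2})$, exactly the set whose cardinality appears on the right-hand side. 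As Dijkstra must inspect the $\calG_i$-neighbors of every node it dequeues, and a cache prevents recomputing the neighbor list of a node visited by both traversals, these two runs issue at least one and at most one radius-$r(|V_i|)$ query per node of $V_{\text{preproc}}$---hence exactly $|V_{\text{preproc}}|$ queries.

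I would then show the searching phase adds nothing. Although Alg.~\ref{alg:search} is nominally invoked on all of $\calT = (V_i, \cdot)$, the discarding and termination rules (lines~17--18), fed by the \texttt{cost\_to\_go} values set in Alg.~\ref{alg:preproc}, guarantee that every node outside $V_{\text{preproc}}$ carries cost-to-go $\infty$ and is therefore never expanded. Under lazy neighbor computation a query is issued only for a node that is actually processed, so the search touches only nodes of $V_{\text{preproc}}$; and because it uses the same radius $r(|V_i|)$ as the two preprocessing traversals, each such neighbor list is already cached and no fresh query is incurred.

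The main obstacle is the third traversal (Alg.~\ref{alg:preproc}, line~5), which runs on $G(V_{\text{preproc}}, r(|V_{\text{preproc}}|))$ at the \emph{enlarged} radius $r(|V_{\text{preproc}}|) \ge r(|V_i|)$ (the inequality following from $|V_{\text{preproc}}| \le |V_i|$) rather than at $r(|V_i|)$. A strict bijection with $V_{\text{preproc}}$ requires arguing that this run contributes no new terms to the count: either by charging its neighbor lookups---restricted to the already-identified set $V_{\text{preproc}}$---to the negligible graph-traversal cost assumed in this section, or by answering them from auxiliary data maintained on $V_{\text{preproc}}$ without a further radius query. Making this attribution precise, together with fixing the ``$<$'' versus ``$\le$'' convention at the cutoff so that the traversed set matches the definition of $B_{\calG_i}$ exactly, is the delicate point; once it is settled, the counting above yields the claimed equality immediately.
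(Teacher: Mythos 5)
Your counting strategy is the same as the paper's, whose entire justification is three sentences: the preprocessing phase performs one NN call per node it traverses, the traversed set is exactly $B_{\calG_i}\left(x_{\text{init}}, \frac{\clb}{2}\right) \cup B_{\calG_i}\left(\calX_{\text{goal}}, \frac{\clb}{2}\right)$, and the searching phase performs no additional calls because it ``uses only nodes traversed in the preprocessing phase.'' Your lazy-computation-with-caching formalization is precisely what that last sentence leaves implicit. The ``delicate point'' you isolate is genuine, and the paper does not address it: the run in line~5 of Alg.~\ref{alg:preproc} is specified on $G(V_{\text{preproc}}, r(|V_{\text{preproc}}|))$ with $r(|V_{\text{preproc}}|) \geq r(|V_i|)$, so its lookups cannot literally be served by the lists cached at radius $r(|V_i|)$ in lines~1--2; the claimed equality (rather than a bound of up to $2\,|V_{\text{preproc}}|$) holds only under an accounting convention such as one of the two you propose, and the paper adopts one silently by charging a single NN call to each traversed node. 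On this point your write-up is more candid than the source, not weaker.

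The one soft spot is in your searching-phase argument. Showing that nodes outside $V_{\text{preproc}}$ are never \emph{expanded} does not by itself bound the NN calls: line~8 of Alg.~\ref{alg:search} requests the neighbor list $N_x$ of every $x \in X_{\text{near}}$, i.e., of the \emph{neighbors} of the expanded node, and under lazy computation each such request is an NN call --- this is exactly why Observation~\ref{obs:nn_fmt} for aFMT* is stated as an inequality rather than an equality. What closes the argument is the stronger fact, asserted by the paper and implicit in your phrase ``the search touches only nodes of $V_{\text{preproc}}$,'' that the searching phase is run on the vertex set $V_{\text{preproc}}$ itself (the nodes discarded in lines~3--4 of Alg.~\ref{alg:preproc} are excluded from the search), so every list requested at line~8 belongs to a preprocessed node and is already cached. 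With that restriction made explicit, your count coincides with the paper's.
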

\vspace{2mm}
Thus, \LB \emph{may} perform more NN queries than FMT*.
}


\subsection{Local planning calls (LP)}
\textVersion
{The LP}
{Recall that the local planner is a procedure that tests if the straight-line segment connecting two configurations is collision-free.
The number of LP calls depends on the \Cs and is somewhat harder to quantify. Yet, the LP procedure}
will be called whenever either algorithm (aFMT* or \LB) attempts to insert a node to the search-tree (line~10 in Alg.~\ref{alg:search}).
Thus we can state the following lemma:
\begin{lem}
If \LB performs an LP call for the edge $(x, y)$ in the $i$'th iteration
then 
aFMT* will perform an LP call for the edge $(x, y)$ as well.
\end{lem}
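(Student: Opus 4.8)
The plan is to prove the set inclusion $E^{\mathrm{LP}}(\text{\LB}) \subseteq E^{\mathrm{LP}}(\text{aFMT*})$, where $E^{\mathrm{LP}}(\cdot)$ denotes the set of edges on which the algorithm invokes \texttt{collision\_free}. First I would pin down exactly when such a call occurs. Inspecting line~10 of Alg.~\ref{alg:search}, an LP call on an edge $(y,x)$ is issued precisely when some node $z$ is being processed, $x \in W \cap N_z$ is an as-yet-unconnected neighbor of $z$, and $y=y_{\min}$ is the minimizer of $g(y')+\texttt{dist}(y',x)$ over the current frontier neighbors $y' \in H \cap N_x$. Thus the lemma is equivalent to the statement that every decisive triple $(z,x,y_{\min})$ arising in \LB also arises at some step of aFMT*.

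My main device would be a \emph{coupled execution}: run both algorithms on the same sequence $S$, the same $V_i$, the same disk graph $\calG_i$ and neighbor sets $N_v$, and the same deterministic tie-breaking rule, recalling that aFMT* is exactly the instantiation of Alg.~\ref{alg:search} with $h\equiv 0$, $c_{max}=\infty$, and the full node set $V_i$, whereas \LB uses $h=\texttt{\h}_{\calG_i}$, $c_{max}=\clb$, and the restricted set $V_{\text{preproc}} \subseteq V_i$. I would then prove, by induction on \LB's while-loop iterations, the invariant that (i)~every node \LB has connected is also connected by aFMT* with the \emph{same} cost-to-come value and the \emph{same} parent, and (ii)~every LP call \LB has performed so far has also been performed by aFMT*. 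The base case is immediate since both trees start as $\set{x_{\text{init}}}$.

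For the inductive step, when \LB selects $z$ and processes $x \in W \cap N_z$ with chosen parent $y_{\min}$, I would use the invariant to argue that aFMT* confronts the identical decision: $x$ and all of $x$'s relevant candidate parents lie in $V_i$ (aFMT* never prunes, since $V_{\text{preproc}}\subseteq V_i$ and $c_{max}=\infty$ suppress both node discarding and early termination), and by part~(i) the cost-to-come values of the shared frontier nodes agree, so the $\arg\min$ defining $y_{\min}$ evaluates to the same vertex; hence aFMT* issues the same \texttt{collision\_free}$(y_{\min},x)$ call, giving~(ii). The discarding and early-termination steps in \LB, justified by the lower bound property $\texttt{\h}_{\calG_i}(x)\le\texttt{\h}_{\calH_i}(x)$ together with the node-containment bound of the previous lemma, only \emph{remove} work and so are consistent with maintaining~(i).

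The hard part is reconciling the two \emph{processing orders}. Because \LB orders the frontier by $f=g+h$ while aFMT* orders it by $g$ alone, a node $x$ may first be examined at genuinely different moments in the two runs, and since the FMT* recursion is \emph{lazy}, the frontier set $H\cap N_x$ available at that moment --- and therefore the minimizer $y_{\min}$ and the cost-to-come it induces --- is itself order-sensitive. The crux of the argument is to show that this sensitivity cannot make \LB's choice diverge from aFMT*'s: one must exploit that $\texttt{\h}_{\calG_i}$ is admissible and consistent (so $f$ is monotone along the processing order) together with the shared tie-breaking rule, to conclude that at the step where each algorithm first connects $x$ the pertinent candidate parents have already entered $H$ in both runs, forcing an identical $y_{\min}$. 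I expect this order-reconciliation, rather than the routine bookkeeping of the pruning rules, to be the delicate point the proof must nail down.
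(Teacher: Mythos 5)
Your proposal identifies the right statement to prove (the inclusion of \LB's LP-call edge set in aFMT*'s) and, to your credit, also identifies exactly where the difficulty lies --- but it does not resolve that difficulty, and that is a genuine gap. Your inductive invariant~(i) (every node \LB connects is connected by aFMT* with the same parent and the same cost-to-come) is precisely what the order-sensitivity of the lazy recursion threatens, and the tools you invoke to rescue it (admissibility/consistency of $\texttt{\h}_{\calG_i}$, a shared tie-breaking rule) control the processing order \emph{within} one run, not the relationship \emph{between} the two runs. The problem is cross-run: at the moment aFMT* first connects $x$, its frontier $H \cap N_x$ may contain nodes that \LB's frontier did not contain at the moment \LB connected $x$ (and vice versa), both because the orderings by $g$ versus $g+h$ pop nodes in different sequences and because \LB restricts attention to $V_{\text{preproc}} \subseteq V_i$. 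Nothing in your sketch rules out that the two $\arg\min$'s then differ, which would break invariant~(i) at the inductive step; you say you ``expect'' order-reconciliation to go through, but that expectation \emph{is} the lemma, not a proof of it.

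The paper avoids coupling the executions altogether. It characterizes the event ``an LP call is made on $(x,y)$'' by conditions that never mention processing order: (i)~$x$ and $y$ are neighbors in $\calG_i$; (ii)~$\texttt{\g}_{\calH_i}(x) < \texttt{\g}_{\calH_i}(y)$; (iii)~every competing candidate parent $z$ of $y$ that could yield a smaller cost-to-come has $(z,y)$ in collision. What makes this characterization order-free is the paper's observation that the cost-to-come either algorithm computes for any node equals $\texttt{\g}_{\calH_i}$ of that node --- which is essentially your invariant~(i), taken by the paper as an observation about the recursion rather than something to be proved by induction on coupled runs. Granted that observation, the lemma follows because conditions (i)--(iii) transfer from \LB to aFMT* using only $V_{\text{preproc}} \subseteq V_i$. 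So your plan aims at the correct target but leaves its crux unproven; to salvage the coupled-execution route you would have to first establish that the computed cost-to-come values are order-independent (equal to $\texttt{\g}_{\calH_i}$), at which point the frontier-by-frontier bookkeeping you propose becomes unnecessary and the paper's shorter argument takes over.
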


\begin{proof}
The LP procedure will be called for every pair of nodes $x,y$ in the search tree such that:
(i)~$x,y$ are neighbors in $\calG_i$ 
(namely their distance is less than $r(|V_i|)$),
(ii)~$\texttt{\g}_{\calH_i}(x) < \texttt{\g}_{\calH_i}(y)$ 
(namely $x$ is inserted to the tree before $y$),
and
\textVersion{(iii)~the edge $(z,y)$ is not collision-free for all other neighbors $z$ of $y$ in the tree that could potentially lead to smaller cost-to-come values of $y$.}
{(iii)~the edge $(z,y)$ is not collision-free for every $z$ that is 
a neighbor of $y$ in $\calG_i$ and
$\texttt{\g}_{\calH_i}(z) + \texttt{cost}(z,y) \leq 
 \texttt{\g}_{\calH_i}(x) + \texttt{cost}(z,x)$
(namely all other neighbors $z$ of $y$ in the tree that could potentially lead to smaller cost-to-come values of $y$).
}

If \LB performs an LP call for the edge $(x, y)$%
\textVersion{}{ in the $i$'th iteration}
then conditions (i),(ii) and~(iii) hold for the samples $x, y$ in \LB.
	To prove the lemma we show that they hold for the samples $x, y$ in aFMT*.
Condition (i) holds trivially as it is a property of the samples. 
Note that the cost-to-come of any node $z$ computed by both algorithms equals to $\texttt{\g}_{\calH_i}(z)$. 
Using this observation and  that $V_{\text{preproc}} \subseteq V_i$ (namely the nodes used by \LB is a subset those used by aFMT*), 
conditions (ii) and (iii) hold as well.
\end{proof}
\vspace{-3mm}
\subsection{Discussion}
From the above analysis we conclude that 
\LB will perform \emph{no more} LP calls than aFMT*.
It \emph{may} perform \emph{more} NN calls than aFMT*.
As we demonstrate empirically in the Evaluation section, 
the number of NN calls that \LB performs may actually be smaller than that of aFMT*.
Moreover, as the number of iterations increases, 
\LB performs only a tiny fraction of the number of LP calls performed by aFMT*.
\textVersion
{
}
{\section{Relaxing optimality of MPLB}
\label{sec:apx}
Asymptotically-optimal motion-planning algorithms such as MPLB often, from a certain stage of their execution,
invest huge computational resources at only slightly improving the cost of the current best existing solution. 
We aim to overcome this problem by relaxing AO to ANO.
To this end, given an \emph{approximation factor} $\varepsilon$, 
the ANO version of MPLB (termed ANO-MPLB)  maintains the following invariant: 
\[
	\clbi \leq c_{i}(\text{ANO-\LB}) \leq (1 + \varepsilon) \cdot \clbi.
\]
Namely, at each iteration, the cost of the solution obtained by ANO-MPLB is within a factor of 
$1 + \varepsilon$ 
from the solution that MPLB would obtain for the same set of samples. 
We call this the \textbf{bounded approximation invariant}.
As MPLB is AO, and if the bounded approximation invariant is indeed maintained then the following holds.

\begin{cor}
\label{cor:ano}
ANO-MPLB is asymptotically near-optimal.
\end{cor}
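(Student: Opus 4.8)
The plan is to derive the corollary directly from the asymptotic optimality of \LB together with the bounded approximation invariant, via an elementary squeeze argument. First I would fix the cost of the optimal solution, call it $c^*$, and recall that since \LB is AO, the sequence of solution costs \clbi\ converges to $c^*$ as the iteration index $i$ grows; this convergence is in probability, matching the notion of AO that \LB inherits from FMT*. This AO of \LB is the only nontrivial ingredient, and we are permitted to assume it in the hypothesis of the corollary.

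Next I would invoke the bounded approximation invariant, which sandwiches the cost of ANO-\LB between that of \LB and $(1+\varepsilon)$ times that of \LB:
\[
	\clbi \leq c_{i}(\text{ANO-\LB}) \leq (1 + \varepsilon) \cdot \clbi.
\]
Letting $i \to \infty$, the left-hand side tends to $c^*$ while the right-hand side tends to $(1+\varepsilon)c^*$. Since every solution cost is at least the optimum, we also have $c_{i}(\text{ANO-\LB}) \geq c^*$. Combining these, $c_{i}(\text{ANO-\LB})$ is trapped asymptotically in the interval $[c^*, (1+\varepsilon)c^*]$, which is precisely the statement that the cost of the solution returned by ANO-\LB converges to within a factor $1+\varepsilon$ of the optimal cost, i.e.\ the definition of asymptotic near-optimality.

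The one point requiring care is that the convergence of \clbi\ is probabilistic rather than deterministic, so the squeeze must be phrased accordingly. For the upper bound I would observe that, for any $\delta > 0$, whenever $\clbi \leq c^* + \delta$ the invariant forces $c_{i}(\text{ANO-\LB}) \leq (1+\varepsilon)(c^* + \delta)$; hence the event $\{c_{i}(\text{ANO-\LB}) > (1+\varepsilon)(c^* + \delta)\}$ is contained in $\{\clbi > c^* + \delta\}$, whose probability tends to zero by the convergence in probability of \clbi. I expect this probabilistic bookkeeping to be the only subtle step; the algebraic squeeze itself is immediate once the AO of \LB and the invariant are granted, so no assumptions beyond these two are needed. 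I would therefore devote the bulk of the write-up to stating the probabilistic limit precisely and remarking that the argument is uniform in the fixed approximation factor $\varepsilon$.
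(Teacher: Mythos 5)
Your proposal is correct and takes essentially the same route as the paper: the paper states the corollary under the hypotheses that \LB is AO and that the bounded approximation invariant holds (the latter proved separately in Lemma~\ref{lem:invariant}), and then remarks that the corollary ``immediately follows'' from these two facts. Your squeeze argument, including the careful convergence-in-probability bookkeeping via the event containment $\{c_{i}(\text{ANO-\LB}) > (1+\varepsilon)(c^* + \delta)\} \subseteq \{\clbi > c^* + \delta\}$, is precisely that immediate deduction written out in full, so the only content you omit (the inductive proof that the invariant is actually maintained) is exactly what the paper also keeps outside the corollary.
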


ANO-MPLB is implemented by simply replacing the cost of the solution produced in the previous iteration $c_{prev}$ by the value $\frac{c_{prev}}{1+\varepsilon}$ 
(see lines 4 and 5 of Alg.~\ref{alg:MPLB}).

We note that for all definitions used in Sections~\ref{sec:alg} and~\ref{sec:analysis} and the analysis presented in Section~\ref{sec:analysis} one needs to replace $c_{i-1}(MPLB)$ by $\frac{c_{i-1}(ANO-MPLB)}{1+\varepsilon}$ for ANO-MPLB.

In order to show that \LB is ANO we first show that,
\begin{lem}
\label{lem:invariant}
The bounded approximation invariant is maintained by \LB.
\end{lem}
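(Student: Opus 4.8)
The plan is to prove the invariant by induction on the iteration index $i$, always comparing the two algorithms on the \emph{same} sample sequence $S$, so that the graph $\calH_i$ — and in particular the optimal solution cost $o_i := \texttt{\h}_{\calH_i}(x_{\text{init}})$ attainable in $\calH_i$ — is common to both. For the base case (the first iteration) note that $c_{prev}=\infty$, so ANO-\LB calls \texttt{search} with exactly the same threshold $c_{max}=\infty$ as \LB and returns an identical solution; both inequalities hold trivially.

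For the inductive step I would first isolate the two facts that drive everything. (a) Both algorithms are anytime and retain the best path found so far, so the stored cost is non-increasing and changes in iteration $i$ only when a strictly cheaper path is discovered. (b) Whenever an iteration runs \texttt{search} with a threshold $c_{max}>o_i$, it returns a path of cost exactly $o_i$: by the lower bound property every vertex $x$ on an optimal path of $\calH_i$ satisfies $\texttt{\g}_{\calG_i}(x)+\texttt{\h}_{\calG_i}(x)\le \texttt{\g}_{\calH_i}(x)+\texttt{\h}_{\calH_i}(x)=o_i<c_{max}$, hence $\texttt{\g}_{\calG_i}(x)<\tfrac{c_{max}}{2}$ or $\texttt{\h}_{\calG_i}(x)<\tfrac{c_{max}}{2}$, so $x$ is collected into $V_{\text{preproc}}$ and, since $\texttt{\g}_{\calH_i}(x)+\texttt{\h}_{\calG_i}(x)\le o_i<c_{max}$, it is never discarded; as the cost-to-go estimate is a valid lower bound the FMT*-style search is optimal among sub-threshold paths. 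Combining (a) and (b), \LB (threshold $c_{i-1}(\text{\LB})$) attains $c_i(\text{\LB})=\min\{c_{i-1}(\text{\LB}),\,o_i\}$, while ANO-\LB runs with the reduced threshold $T=c_{i-1}(\text{ANO-\LB})/(1+\varepsilon)$ and therefore attains $c_i(\text{ANO-\LB})=o_i$ when $o_i<T$ and $c_i(\text{ANO-\LB})=c_{i-1}(\text{ANO-\LB})$ otherwise.

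I would then split on these two outcomes. If $o_i<T$, the upper inductive hypothesis $c_{i-1}(\text{ANO-\LB})\le(1+\varepsilon)c_{i-1}(\text{\LB})$ gives $o_i<T\le c_{i-1}(\text{\LB})$, so \LB improves as well and $c_i(\text{\LB})=o_i=c_i(\text{ANO-\LB})$, making both inequalities immediate. If $o_i\ge T$, then $c_i(\text{ANO-\LB})=c_{i-1}(\text{ANO-\LB})=(1+\varepsilon)T\le(1+\varepsilon)o_i$, and the upper bound follows by inspecting $c_i(\text{\LB})=\min\{c_{i-1}(\text{\LB}),o_i\}$: either $c_i(\text{\LB})=o_i$, whence $c_i(\text{ANO-\LB})\le(1+\varepsilon)o_i=(1+\varepsilon)c_i(\text{\LB})$, or $c_i(\text{\LB})=c_{i-1}(\text{\LB})$, whence the claim is precisely the upper inductive hypothesis. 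The lower bound is uniform over all cases: $c_i(\text{\LB})=\min\{c_{i-1}(\text{\LB}),o_i\}\le c_{i-1}(\text{\LB})\le c_{i-1}(\text{ANO-\LB})=c_i(\text{ANO-\LB})$, using the lower inductive hypothesis together with fact (a).

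I expect the main obstacle to be fact (b): arguing that a search run above $o_i$ genuinely recovers the optimal sub-threshold path rather than some arbitrary feasible one, which is exactly where the lower bound property and the guarantee that preprocessing collects every promising node must be invoked. This step also explains why solution-cost monotonicity is derived from the anytime best-so-far retention in (a) rather than from graph inclusion — the radius $r(n)$ shrinks with $n$, so the graphs $\calH_{i-1}\subseteq\calH_i$ need not be nested. The remaining bookkeeping, in particular the two sub-cases of the upper bound when ANO-\LB does not improve, is routine once (a) and (b) are established.
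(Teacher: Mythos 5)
Your induction scaffolding (fixing the same sample sequence $S$, base case with threshold $\infty$, case split on whether the reduced threshold cuts below the best achievable cost) is sound in spirit, but the pivot of your argument --- your fact (b) --- is false, and this is a genuine gap. You claim that whenever the searching phase runs with a threshold $c_{max}$ exceeding $o_i := \texttt{\h}_{\calH_i}(x_{\text{init}})$, it returns a path of cost exactly $o_i$. The search of Alg.~2 is, however, \emph{lazy}: when connecting a node $x$ it collision-checks only the single locally best candidate parent $y_{\text{min}}$ (line~10), and if that edge is blocked, $x$ is simply left unconnected in that round --- the second-best parent is never tried. Consequently the tree built is in general \emph{not} a shortest-path tree of $\calH_i$, and the returned cost can strictly exceed $o_i$ even with $c_{max}=\infty$ and an admissible heuristic; this laziness is exactly why the asymptotic-optimality proof of FMT* is nontrivial and only asymptotic. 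Hence the identities $c_i(\text{\LB}) = \min\{\clb,\, o_i\}$ and $c_i(\text{ANO-\LB}) = o_i$ (when $o_i < T$), on which your entire case analysis rests, do not hold; admissibility of the cost-to-go estimate cannot repair them, because the failure is in edge evaluation, not in node ordering.

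The paper's proof never invokes $o_i$. Its anchor is the path actually returned by the lazy search under the trivial heuristic: it first records an A*-style observation stating that, on a \emph{fixed} node set with \emph{no discarding}, ordering by \g{} and ordering by \g{}$+$\h{} yield the \emph{same} returned path --- an equality between the two runs' outputs, not a claim of optimality in $\calH_i$. The induction step then splits on whether the path returned by aFMT* consists solely of promising nodes: if so, all of its nodes lie in $V_{\text{preproc}}$ and none is discarded (this part of your argument, showing promising nodes survive preprocessing and discarding, is correct and mirrors the paper), so the observation yields equality of costs; if not, the retained previous solution is already within the required factor and the invariant holds trivially. Your proof could be repaired along the same lines by replacing $o_i$ everywhere with the cost the lazy search achieves and invoking a same-path argument instead of an optimality argument. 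A smaller slip, independent of the above: your ``uniform over all cases'' lower-bound chain ends with $c_{i-1}(\text{ANO-\LB}) = c_i(\text{ANO-\LB})$, which fails in your first case (there $c_i(\text{ANO-\LB}) = o_i < c_{i-1}(\text{ANO-\LB})$); it is harmless only because that case was already settled by equality.
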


\begin{proof}
We prove the lemma by induction over the number of iterations.

\noindent
\textbf{Induction base:} 
At the first iteration, $c_{0}(\texttt{\LB}) \leftarrow \infty$ and all nodes are promising. Thus, both the aFMT* algorithm and the \LB algorithm use the same set of nodes $V_1$.
The difference between the two algorithm are 
(i)~the order by which nodes are processed in the searching phase and 
(ii)~the possible discarding of nodes.

Note the following observation, which follows directly from the optimality proof of the A* algorithm using any admissible heuristic.
\begin{obs}
\label{obs:ordering}
Given a fixed set of nodes, namely if no nodes are discarded,
then both the old ordering scheme (using only \g) and the new ordering scheme (using \g+\h) will return the same path.
\end{obs}

We show that if $x_{\text{init}}$ is in the same connected component than any node in $\calX_{\text{goal}}$ then no nodes are discarded 
(if not, then neither aFMT* nor \LB  can return a path to $\calX_{\text{goal}}$):
Assume falsely that a node $x$ is discarded, thus 
$
	\texttt{\g}_{\calH_i}(x) 
	 + 
	\texttt{\h}_{\calG_i}(x) 	
	\geq
\frac{\clb}{1 + \varepsilon}
  = 
  \infty
$.
As 
$\texttt{\g}_{\calH_i}(x)$ 
is bounded (and thus $x$ is in the same connected component as $x_{\text{init}}$)
we have that 
$\texttt{\h}_{\calG_i}(x) = \infty$.
This implies  that 
$x$ is in a different connected component than any node in $\calX_{\text{goal}}$ in contradiction to our assumption.

\noindent
\textbf{Induction step:} 
Assume that the optimal path produced by aFMT* in the $i$'th iteration contains only promising nodes (if not, the bounded approximation invariant is maintained trivially).
Clearly, all the promising nodes are members of the set 
$V_{\text{preproc}}$ computed by \LB in the preprocessing stage  and none of them will be discarded.
Thus, by Observation~\ref{obs:ordering}, $\clbi = \cfmt$ and the approximation invariant is maintained.
\end{proof}

Using the AO of aFMT* and Lemma~\ref{lem:invariant}, Corollary~\ref{cor:ano} immediately follows.

}
\section{Evaluation}
\label{sec:eval}
We present simulations evaluating the performance of \LB as an anytime algorithm on 2, 3 and 6 dimensional \Css.
All experiments were run on a 2.8GHz Intel Core i7 processor with 8GB of memory.
The \LB and aFMT* implementations are based on the FMT* implementation provided by Pavone's research group using the Open Motion Planning Library (OMPL~0.10.2)~\cite{SMK12}.
Each result is averaged over one hundred different runs.
Scenarios and additional material are available at
\url{http://acg.cs.tau.ac.il/projects/MPLB}.

The AO proof of FMT* (and thus of aFMT* and MPLB) relies on the fact that the \Cs is Euclidean.
Thus, we start by studying the motion of robots translating in the plane and in space 
(Fig.~\ref{fig:corr} and~\ref{fig:grids}).
Next, we continue to examine the behavior of the algorithms in SE(3) (Fig.~\ref{fig:cubicles}). 
Here the radius provided for FMT* (Eq.~\ref{eq:r}) is irrelevant due to the differences in the rotational and translational components of the \Cs.
Hence, for both aFMT* and \LB, we chose to connect each node to its $k$ NN, where $k(n)= 9 \log n$:
Karaman and Frazzoli~\cite{KF11} proposed a variant of RRG where each node is connected to its $k_{RRG}$ NN for $k_{RRG}(n) \geq 2e \log n$. 
Although this variant was analyzed for Euclidean spaces only, applying it to non-Euclidean spaces works well in practice (see, e.g.~\cite{SH13}).

\begin{figure*}[t,b]
  \centering
  \subfloat
   [\sf Corridors]
   { 
   	\includegraphics[height = 3.1cm ]{./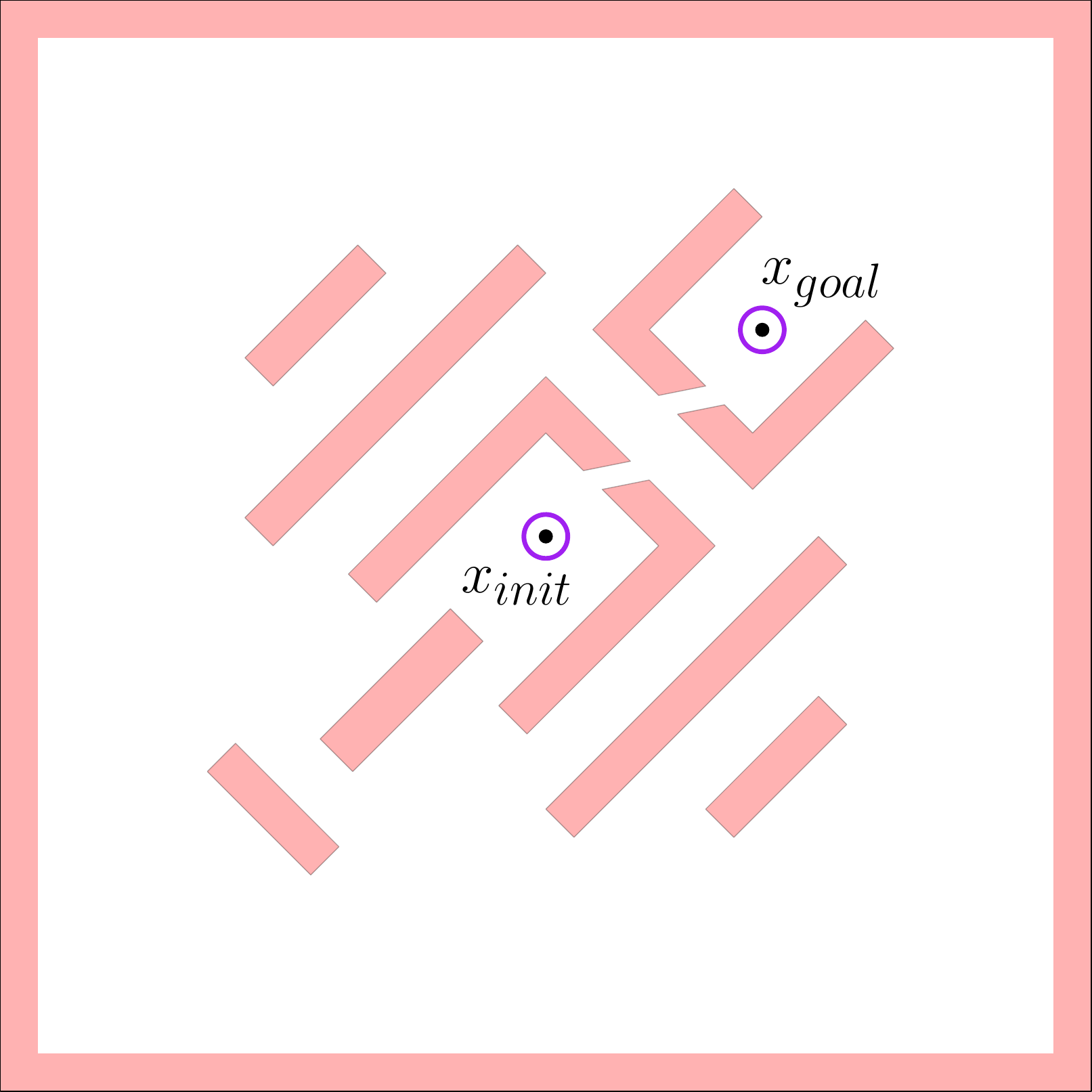}
   	\label{fig:corr}
   }
  \subfloat
   [\sf Grids]
   { 
   	\includegraphics[height = 3.1cm]{./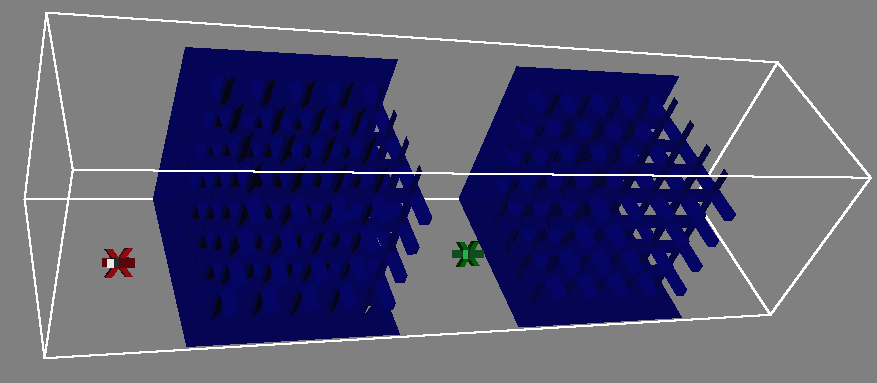}
   	\label{fig:grids}
   }
   \subfloat
   [\sf Home]
   { 
   	\includegraphics[height = 3.1cm]{./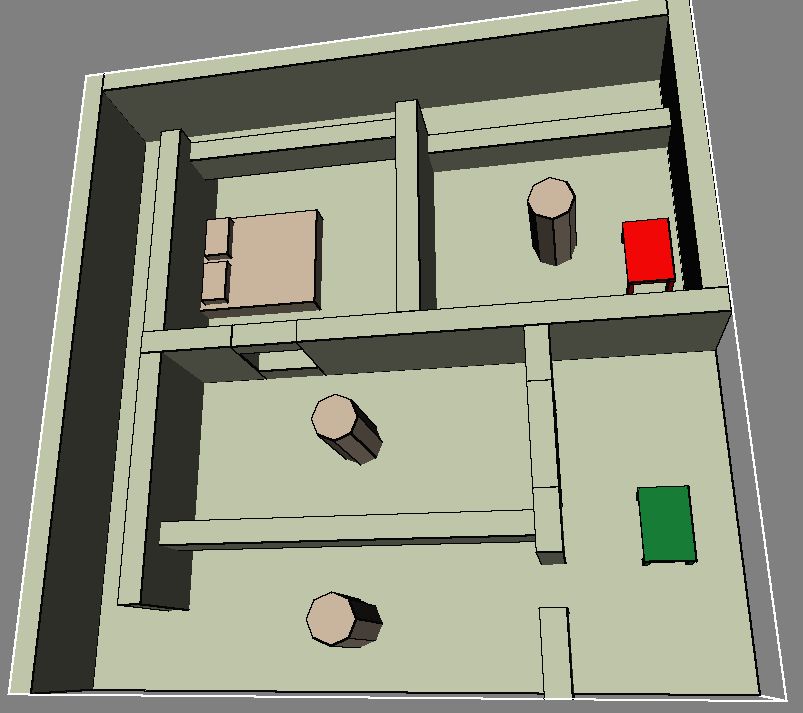}
   	\label{fig:cubicles}
   }
  \caption{\sf 	\footnotesize
  							Scenarios used for the evaluation.
  							(a) Two dimensional setting for a point robot.
  							A low-cost path is easy to find yet in order to find a 
  							high-quality path, the robot needs to pass through two narrow 
  							passages.
  							(b) Three-dimensional \Cs for a translating robot in 
  							space. To find the shortest path the robot needs to pass 
  							through a three-dimensional grid.
  							(c) Six-dimensional \Cs for an L-shaped robot 
  							translating and rotating in space. 
  							Finding a path is relatively easy yet much time is needed to 
  							converge to the optimal path.
  							Start and target configurations for (b) and (c) 
  							are depicted by green and  red robots, respectively,
  							The Home scenario is provided by the OMPL~\cite{SMK12} 
  							distribution.
  							}
  \label{fig:scenarios}
	\vspace{-6mm}
\end{figure*}

\textVersion{}{We note that a possible implementation of \LB
may perform \emph{shortcutting}~\cite{GO07}
at the end of each iteration and use the cost of this shorter path instead of $c_{prev}$ (see Alg.~\ref{alg:MPLB} line~6).
Additionally, the techniques presented can be applied to a bidirectional version of FMT*~\cite{SSJP14}.
We did not present results incorporating these variants in order to focus on the fundamental characteristics that \LB introduces.}

\subsection{Fast convergence to high-quality solutions}

\begin{figure*}[t,b]
  \centering
  \subfloat
   [\sf Corridors]
   { 
   	\includegraphics[height = 3.1cm ]{./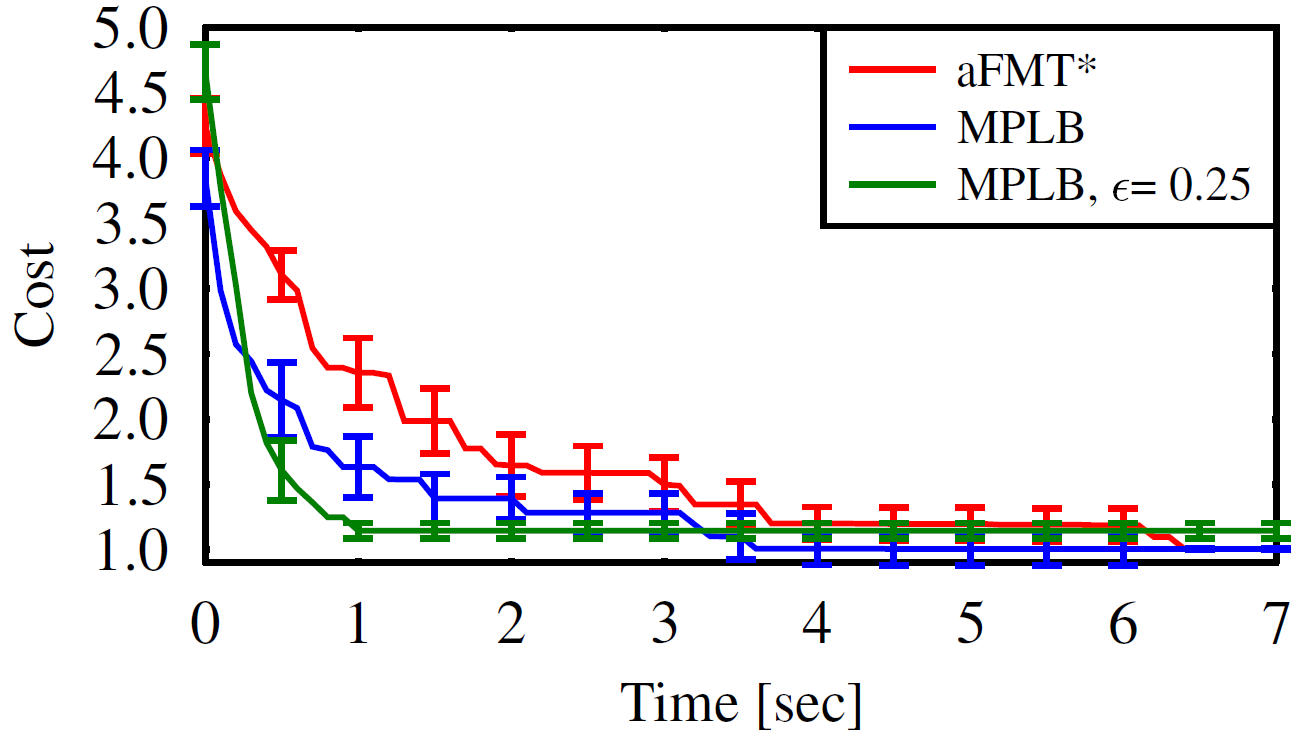}
   	\label{fig:corr_cost}
   }
  \subfloat
   [\sf Grids]
   { 
   	\includegraphics[height = 3.1cm]{./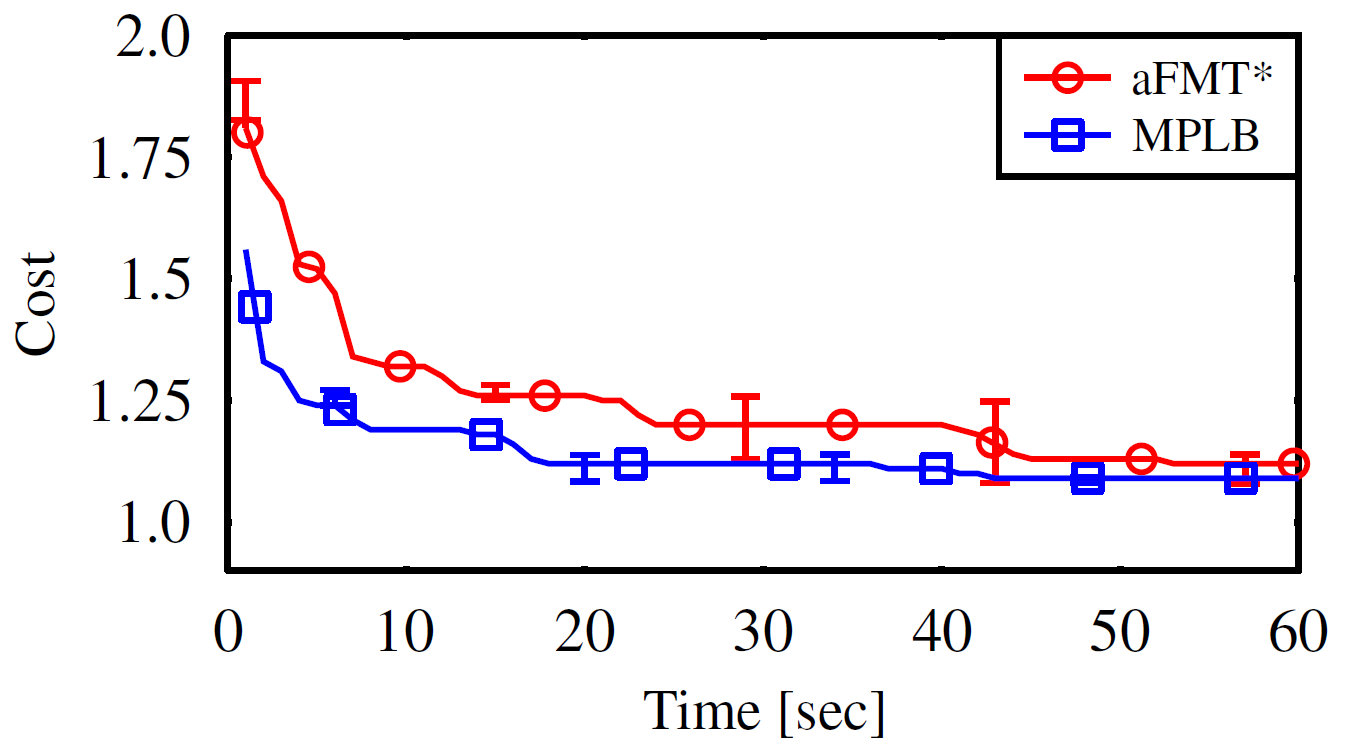}
   	\label{fig:grids_cost}
   }
   \subfloat
   [\sf Home]
   { 
   	\includegraphics[height = 3.1cm]{./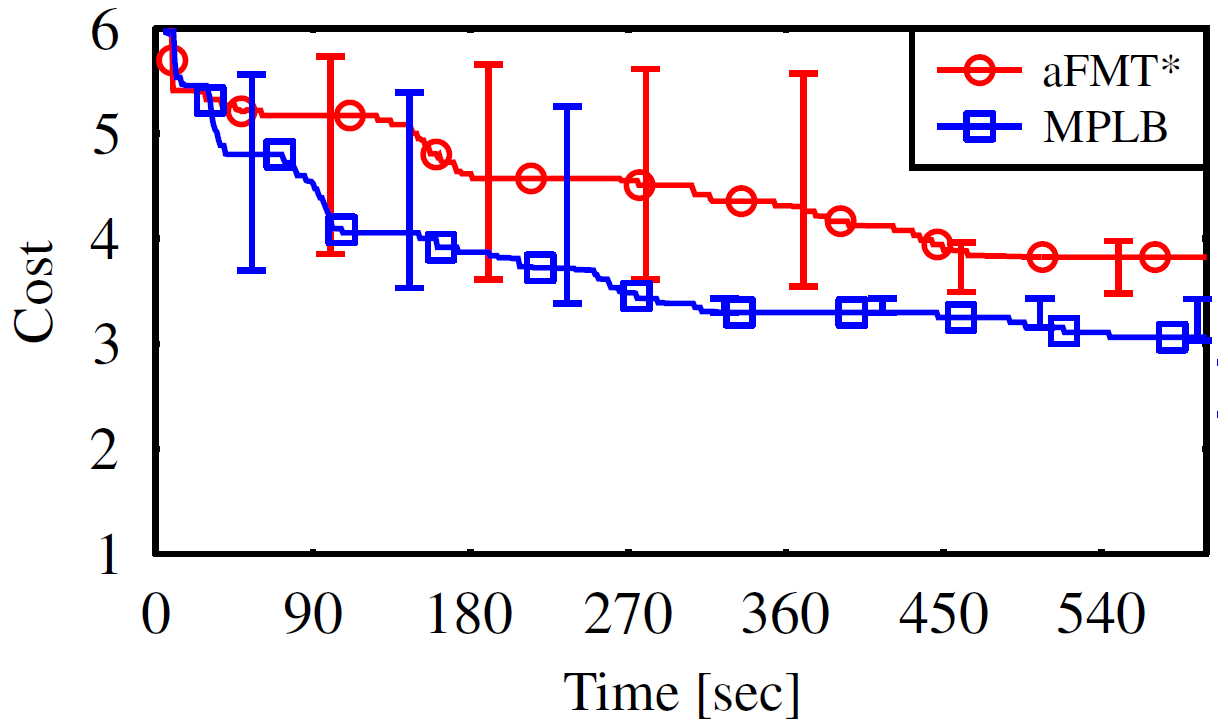}
   	\label{fig:cubicles_cost}
   }
  \caption{\sf 	\footnotesize
  							Average cost vs. time. 
  							Cost values are normalized such that a
  							cost of one represents the cost of an optimal path.
  							Low and high error bars denote the twentieth and eightieth 
  							percentile, respectively.}
  \label{fig:costs}
	\vspace{-6mm}
\end{figure*}

We start by comparing the cost of a solution obtained by aFMT* and \LB  as a function of time (Fig~\ref{fig:costs}). 
In all scenarios \LB typically finds a solution of given cost between two to three times faster than aFMT*.
In the Corridors scenario (Fig.~\ref{fig:corr}) the convergence rate can be sped up by using an approximation factor (see suggestion for future work in Section~\ref{sec:future}). 
Interestingly, as we will show, the speed-up achieved by \LB is done while spending a smaller proportion of the time on LP compared to aFMT*.

\subsection{Nearest Neighbors and Local Planning calls}
\begin{figure}[t,b]
  \centering
  \subfloat
   {
   	\includegraphics[height =3 cm]{./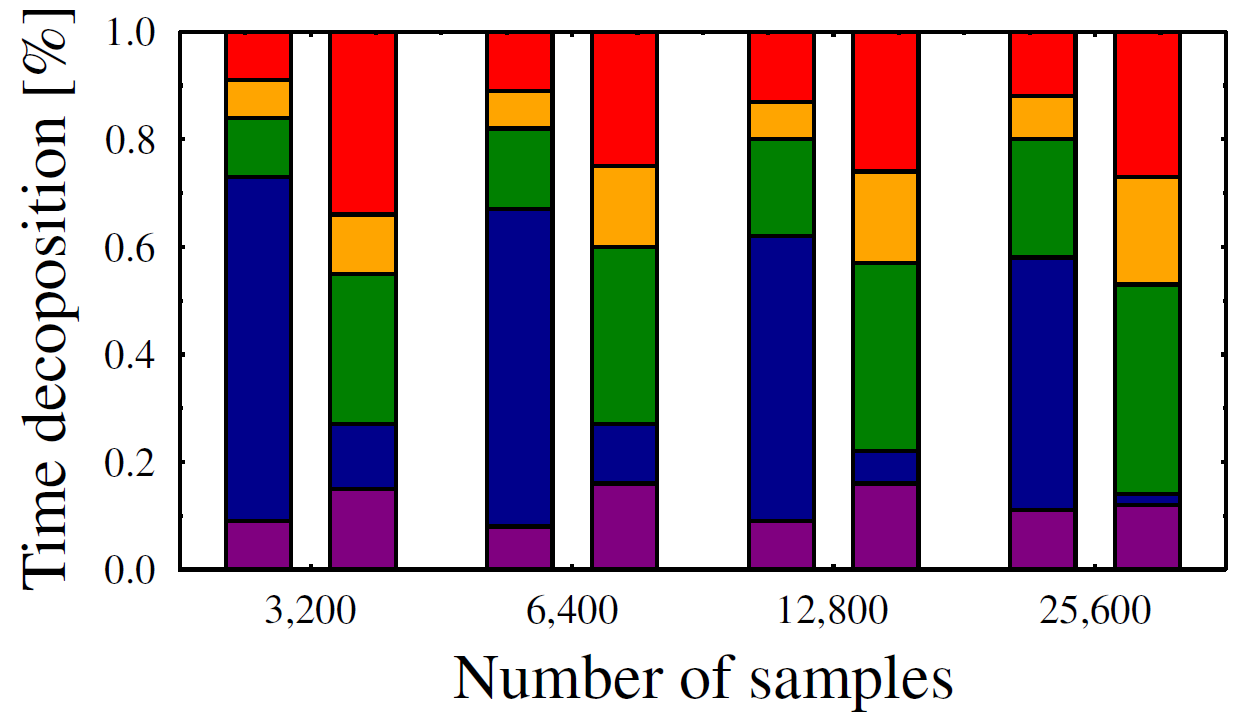}
   	\label{fig:grid_profiler}
   }
     \subfloat
   {
   	\includegraphics[height =3 cm]{./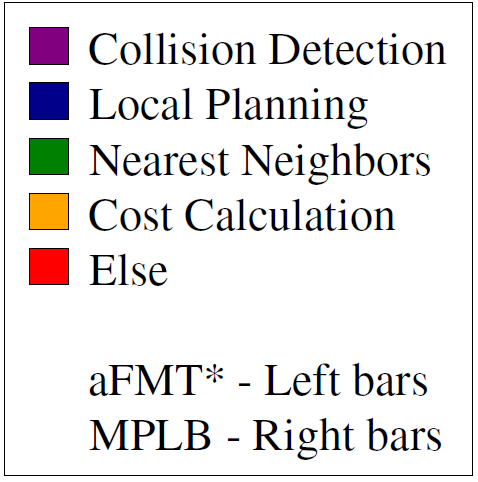}
   	\label{fig:legend}
   }
  \caption{\sf 	\footnotesize
  							Percentage of time spent for each of the main components in 
  							each iteration for both algorithms for the Grids Scenario.
  							Each iteration is represented by the number of samples used.
  							The left (right) bars of each iteration represent the 
  							result of aFMT* (\LB, respectively).
  							Note that the time of each iteration for each algorithm 
  							is different.}
  \label{fig:profiler}
	\vspace{-1mm}
\end{figure}

We profiled aFMT* and \LB and collected the total time spent on CD for point sampling, LP for edges, NN calls and cost computations.
Results for the Grids scenario are presented in Fig.~\ref{fig:profiler} 
(similar behavior was observed for the other scenarios as well).
Clearly,  CD computation time (due to sampling, not LP) is negligible for both algorithms and  cost calculation plays a larger (but still small) role for \LB.
CD calls due to LP calls are the main bottleneck for aFMT* 
(starting at around 65\% and gradually decreasing to 45\%). 
For \LB they start as a main time consumer but as samples are added their percentage of the overall iteration time becomes quite small 
(around 2\% for the last iteration).
NN calls play an almost complementary role to the LP and for the last iteration take 40\% of the total running time for the \LB algorithm 
while taking less than 20\% for aFMT*.

\begin{wraptable}{r}{4.9cm}
\vspace{-3mm}
\begin{tabular}{c|c|c}
$n$ & the ratio & the ratio \\
    &
$\frac{\#_{\texttt{NN}}(\text{\LB})}
  		{\#_{\texttt{NN}}(\text{aFMT}^*)}$ & 
$\frac{\#_{\texttt{LP}}(\text{\LB})}
 		  {\#_{\texttt{LP}}(\text{aFMT}^*)}$ \\
\hline
1.6K   &0.71 	& 0.38\\  
3.2K   &0.53 	& 0.31\\  
6.4K   &0.68  & 0.33\\  
12.8K  &0.68 	& 0.19\\  
25.6K  &0.69 	& 0.20\\  
51.2K  &0.99	& 0.05\\  
\end{tabular}
\vspace{-5mm}
\end{wraptable} 
The table to the right reports on the ratio of  NN and LP calls performed by \LB and aFMT* for the Grids scenario. 
The number of NN calls performed by \LB is lower than those performed by aFMT*.
As expected, \LB performs significantly less LP calls than aFMT*.


\section{Conclusion and outlook}
\label{sec:future}
In this work we show that by using effective lower bounds and with  no compromise on the cost of paths produced by the algorithm, the weight of CD (via LP calls) may become almost negligible with respect to NN calls.
This follows the ideas presented by Bialkowski et al.~\cite{BKOF12} but uses different, more general, methods.
Looking into NN computation, one can notice that AO algorithms such as sPRM*~\cite{KF11}, FMT* and \LB rely on a specific type of NN computation: 
given a set $P$ of $n$ points, 
either compute 
for each point all its $k$ nearest neighbors, 
or all neighbors within distance $r$ from the point.
In both cases, $P$ is known in advance 
and
$k$ (or $r$) are parameters that do not change throughout the algorithm or throughout a single iteration of the algorithm.

\textVersion
{This calls for using application-specific NN algorithms and not general purpose ones.
For example, 
the recent work on randomly shifted grids by Aiger et al.~\cite{AKS13} may be used.
Indeed, we show that using this data structure allows to significantly speed up motion-planning algorithms~\cite{KSH14}.
}
{This calls for using application-specific NN algorithms and not necessarily  general purpose ones (such as $kd$-trees~\cite{FBF77}).
Such algorithms exist, for example, in high dimensions locally sensitive hashing (LSH)~\cite{IM98} may be used.
For smaller dimensions, the work by Aiger et al.~\cite{AKS13} may be a good, practical choice.}

\textVersion
{A different possibility to enhance \LB is to relax AO to ANO:
Asymptotically-optimal motion-planning algorithms such as MPLB often, from a certain stage of their execution,
invest huge computational resources at only slightly improving the cost of the current best existing solution. 
Similar to the approach presented by the authors in a previous work~\cite{SH13} one can construct a variant such that given an \emph{approximation factor}~$\varepsilon$, the cost of the solution obtained  is within a factor of 
$1 + \varepsilon$ from the solution that MPLB would obtain for the same set of samples. 
We expand on this idea in the extended version of our paper~\cite{SH14}.
Preliminary results, presented in Fig.~\ref{fig:corr_cost} show the potential benefit of this approach.
}
{Directions for further research include continuing our theoretical comparative analysis with aFMT* by focusing on the number of NN calls performed by both algorithms.}

\section{Acknowledgements}
We wish to thank Marco Pavone and his co-workers for their advice and support regarding the FMT* algorithm.
\textVersion{}{Additionally we wish to thank Kiril Solovey for fruitful discussions and insightful comments regarding this work.}

\bibliographystyle{IEEEtran}
\bibliography{bibliography}

\end{document}